%% file: neurips_2026.tex
\documentclass{article}

\usepackage[dblblindworkshop, final]{neurips_2026}
\workshoptitle{Decision-Making from Offline Datasets to Online Adaptation: Black-Box Optimization to Reinforcement Learning}

\usepackage[utf8]{inputenc} 
\usepackage[T1]{fontenc}    
\usepackage{hyperref}       
\usepackage{url}            
\usepackage{booktabs}       
\usepackage{amsfonts}       
\usepackage{nicefrac}       
\usepackage{microtype}      
\usepackage{xcolor}         
\usepackage{amsmath}
\usepackage{amssymb}
\usepackage{mathtools}
\usepackage{amsthm}
\newtheorem{theorem}{Theorem}
\newtheorem{proposition}{Proposition}

\usepackage{multirow}
\title{Second-Order Actor-Critic Methods for \\ Discounted MDPs via Policy Hessian Decomposition}

\author{%
Sanjeev Manivannan \\
Indian Institute of Technology Madras \\
Chennai, India \\
\And
Shuban Venkataraman \\
Indian Institute of Technology Madras \\
Chennai, India \\
}

\begin{document}

\maketitle

\begin{abstract}
We address the discounted reward setting in reinforcement learning (RL). To mitigate the value approximation challenges in policy gradient methods, actor-critic approaches have been developed and are known to converge to stationary points under suitable assumptions. However, these methods rely on first-order updates. In contrast, second-order optimization provides principled curvature-aware updates that are proven to accelerate convergence, but its application in RL is limited by the computational complexity of Hessian estimation. In this work, we analyze second-order approximations for the actor update that leverage the full curvature information of the objective as much as possible. A stable approximation requires treating the action-value function as locally constant with respect to policy parameters, which does not generally hold in policy gradient methods. We show that this approximation becomes well-justified under a two-timescale actor-critic framework, where the critic evolves on a faster timescale and can be treated as quasi-stationary during actor updates. Building on this insight, we formulate a second-order actor-critic method for the discounted reward setting that leverages Hessian-vector product (HVP) computations, resulting in a computationally efficient and stable second-order update.
\end{abstract}

\section{Introduction}

Markov Decision Processes (MDPs) provide a framework for sequential decision making under uncertainty, where an agent interacts with an environment defined by states, actions, transition dynamics, and rewards. When the transition dynamics are known, dynamic programming methods such as value iteration \cite{bellman1957dynamic} and policy iteration \cite{howard1960dynamic} compute optimal policies via recursive Bellman equations over value functions. These formulations have been successfully applied to a wide range of control problems, including traffic signal control~\cite{wiering2000multiagent}, robotics~\cite{kober2013reinforcement}, autonomous navigation~\cite{paden2016survey}, and nonlinear dynamical systems~\cite{bertsekas1995neurodynamic}. However, in real world scenarios, the transition dynamics are unknown and therefore one has to resort to RL methods that learn optimal behavior through sampled interactions using approaches such as Monte-Carlo methods \cite{sutton2018reinforcement}, (TD($\lambda$)) learning \cite{sutton1988learning}, and Q-learning \cite{watkins1992q}.

Even though these methods converge to stationary points under suitable conditions, they rely on tabular representations of state or state-action value functions which scale with the size of the state and action spaces. Consequently, in large-scale settings, these representations become intractable due to the curse of dimensionality \cite{bellman1957dynamic}. This has led to the development of policy parameterization \cite{sutton2000policygradient} and modern policy search methods, where the policy is represented as a parametric function $\pi_{\theta}(a\mid s)$, often defined over feature representations $\phi(s)$ or learned function approximators parameterized as $\theta$. There are two major challenges that are being addressed in policy search methods.

The \textit{first challenge} in policy search methods is value estimation. Standard policy gradient approaches often rely on Monte Carlo returns \cite{sutton2018reinforcement}, which provide high variance estimates of the return \cite{sutton1988learning} using complete trajectories and therefore require waiting until the end of an episode before performing updates based on stochastic approximation \cite{borkar1997stochastic}. To address this limitation, actor-critic methods \cite{konda2000actorcritic} were introduced where a parametric value function (critic) is learned alongside the policy to provide relatively low variance estimates of the return in every state. In particular, two-timescale actor-critic approaches \cite{bhatnagar2009actorcritic}, where the critic evolves on a faster timescale than the actor, provide a stable framework in which the value function can be treated as approximately stationary during policy updates.

The \textit{second challenge} lies in incorporating curvature information during optimization. Standard policy gradient methods such as REINFORCE~\cite{sutton2000policygradient} rely on first-order updates that are not affine-invariant, i.e., their performance depends on the parameterization of the policy and does not account for the underlying geometry of the objective. As a result, these methods ignore curvature information, which can lead to slow convergence. In contrast, second-order optimization offers principled curvature-aware updates that are invariant to affine transformations of the parameter space and can significantly improve convergence efficiency~\cite{kakade2001natural, schulman2015trpo}. However, its application in reinforcement learning is limited due to the computational complexity ($O(d^{3})$) of Hessian estimation as well as the instability induced by the high-variance interaction terms in the policy Hessian decomposition~\cite{prashanth2018zeroth}. This motivates the need for structured approximations \cite{furston2016approximate} of the Hessian that mitigate the effect of the high variance cross terms and the curvature components that impede convergence of the objective, while preserving useful curvature information for efficient optimization.

\textbf{Our work:} Prior work typically addresses these challenges independently. In this work, we establish a connection between them:

$\bullet$ \textbf{Two-timescale insight:} Second-order approximations are generally unstable due to high-variance curvature components and can lead to loss of useful curvature information. Stabilizing these approximations requires: (i) low-variance state-action value estimates, which are supported by the rapidly evolving critic in the two-timescale actor-critic framework, and (ii) treating the action-value function as locally constant with respect to the policy parameters during actor updates. Although this assumption does not generally hold in policy gradient methods, it becomes more valid in the two-timescale actor-critic framework \cite{bhatnagar2009actorcritic}, where the critic evolves on a faster timescale than the actor and can be treated as quasi-stationary \cite{konda2000actorcritic}.
    
$\bullet$ \textbf{Efficient Second-order updates:} Building on this insight, we analyze second-order approximations \cite{furston2016approximate} that efficiently capture the curvature structure of the objective, and show that they can be implemented in the two-timescale actor-critic setting using Hessian-vector products. This results in improved sample efficiency and stable second-order updates in the discounted MDP setting.

In this paper, we further demonstrate our claims across a range of benchmark environments, ranging from discrete control tasks such as CartPole and LunarLander \cite{brockman2016openai}, to continuous control domains from MuJoCo environments including Reacher and Humanoid \cite{todorov2012mujoco}.

\section{Mathematical Preliminaries}

In this section, we introduce Markov decision processes and the standard terminology used throughout the paper. We further discuss policy search methods \cite{sutton2000policygradient}, policy Hessian decomposition \cite{prashanth2018zeroth}, and actor-critic methods \cite{bhatnagar2009actorcritic}.

\subsection{Markov Decision Processes (MDPs)}

In a Markov decision process an agent (or controller) sequentially interacts with an environment by selecting actions based on the current state of the environment. The system then transitions to a new state, often in a stochastic manner, and the agent receives a scalar reward that depends on the selected action and the resulting state. In our setting, the objective is to maximize the expected total discounted reward. Formally an MDP is defined using a tuple $(S,A,P,R,\gamma,p_i)$, where $S$ and $A$ denote the action and state spaces respectively, $R(s,a)$ represents the reward function,  $P(s' \mid s,a)$ defines the transition dynamics, $\gamma \in [0,1)$ is the discount factor and $p_i(s_o)$ represents the initial state distribution. 

We describe the evolution of an MDP as follows. At time step $t$, the agent observes state $s_t \sim p_t$, selects an action $a_t \sim \pi(\cdot \mid s_t)$, transitions to the next state $s_{t+1} \sim P(\cdot \mid s_t, a_t)$, and receives a scalar reward determined by $R$. We consider a finite horizon discounted setting. We represent the states and actions until the termination at $H$ using a trajectory $\tau = (s_0, a_0, s_1, a_1, \dots, s_{H-1}, a_{H-1}, s_H)$ where $H$ is the trajectory or episode length. The probability of trajectory $\tau$ following policy $\pi$ is given by:
\begin{equation}
p(\tau \mid \pi) = \Big(\prod_{t=0}^{H-1} P(s_{t+1} \mid s_t, a_t) \pi(a_t \mid s_t) \Big) p_i(s_0)
\end{equation}
We denote the total discounted reward of a trajectory $\tau$ using $D(\tau) = \sum_{t=0}^{H-1} \gamma^{t} R(s_t, a_t)$ where $\gamma \in [0, 1)$. Our objective is to maximize the expected cumulative discounted reward under a policy $\pi$:
\begin{equation}
J(\pi) = \mathbb{E}_{\tau \sim p(\tau \mid \pi)}\big[D(\tau)\big] = \mathbb{E}_{s_t, a_t \sim p_t}\big[\sum_{t=0}^{H-1}\gamma^{t} R(s_t, a_t)\big]  
\end{equation}
The objective $J(\pi)$ represents the expected return starting from the initial state distribution. To characterize returns at the state and state--action level, we define the value functions as follows. The state-value function is defined as
\begin{equation}
V(s ;\pi) = \mathbb{E}_{\tau \sim p(\tau \mid s_0 = s, \pi)}\left[ \sum_{t=0}^{H-1} \gamma^t R(s_t, a_t)\right],
\end{equation}
which represents expected return starting from state $s$. The state--action value function is defined as
\begin{equation}
Q(s,a ;\pi) = \mathbb{E}_{\tau \sim p(\tau \mid s_0 = s, a_0 = a, \pi)}\left[ \sum_{t=0}^{H-1} \gamma^t R(s_t, a_t) \right],
\end{equation}
which represents the expected return starting from state $s$ and taking action $a$. The advantage function measures the relative value of an action compared to the average value of a state, and is defined as
\begin{equation}
A(s,a; \pi) = Q(s,a; \pi) - V(s; \pi),
\label{eq:generalized-advantage-estimate}
\end{equation}
where $V(s;\pi) = \mathbb{E}_{a \sim \pi(\cdot \mid s)}\big[Q(s,a;\pi)\big]$. In practice, we estimate advantages using TD(0)-based advantage estimate, which provides bias--variance trade-off for policy gradient updates.
\subsection{Policy Search Methods}
In large-scale settings, classical MDP formulations suffer from the curse of dimensionality due to exponential growth of the state--action space. To address this, modern policy search methods assume policy parameterization by a vector $\theta \in \mathbb{R}^d$. In this paper, we use the notation $\pi_\theta$ as a shorthand for the distribution $\pi(a_t \mid s_t; \theta)$,   $p(\tau; \theta) \equiv p(\tau; \pi_\theta)$ for the trajectory distribution and $J(\theta) \equiv J(\pi)$ for the expected cost. For simplicity, we assume terminal reward is $0$. Our aim is to find a parameter
\begin{equation}
\theta^* \in \arg\max_{\theta \in \mathbb{R}^d} J(\theta).
\end{equation}
We use the discounted occupancy measure \cite{furston2016approximate} induced by a policy $\pi_\theta$ as
\begin{equation}
\rho_\gamma(s,a;\theta) = \sum_{t=0}^{H-1} \gamma^t \Pr(s_t = s, a_t = a \mid \pi_\theta).
\end{equation}
where, $\sum_{s,a \in \mathcal{S} \times \mathcal{A}} \rho_{\gamma}(s,a;\theta) = (1-\gamma)^{-1}$ for an infinite horizon $H \to \infty$. Additionally, the total discounted reward function can be written as $J(\theta) = \sum_{s,a \in \mathcal{S} \times \mathcal{A}} \rho_\gamma(s,a;\theta) R(s,a)$. The gradient $\nabla J(\theta)$ can be written as
\begin{equation}
\nabla J(\theta)
= \sum_{t=0}^{H-1} \sum_{\tau} \gamma^{t-1} R(s_t, a_t)\, \nabla p(\tau; \theta).
\end{equation}
One of the most widely used approaches in policy search follow the principles of stochastic approximation \cite{borkar1997stochastic} commonly referred to as policy gradient methods. A canonical instance is the REINFORCE algorithm \cite{sutton2000policygradient}. These methods rely on first-order updates given by:
\begin{equation}
\theta_{k+1} = \theta_{k} + \alpha \nabla J(\theta_{k})
\label{eq:policy-gradient}
\end{equation}
However, first order updates generally ignore curvature information and are often poorly scaled, leading to update directions that are sensitive to the policy parameterization. This motivates the use of preconditioning, where the gradient is rescaled before updating to account for the local geometry of the objective. In this context, stochastic Newton methods \cite{prashanth2018zeroth} introduce a preconditioning matrix $M(\theta)$ in \eqref{eq:policy-gradient} as given below:
\begin{equation}
\theta_{k+1} = \theta_{k} + \alpha M(\theta_k) \nabla J(\theta_k)
\end{equation}
A widely used choice of preconditioning matrix in policy optimization is given by the Natural Policy Gradient (NPG), where the preconditioning metric $M(\theta)$ is taken as the inverse of the Fisher Information matrix of the policy distribution. Specifically, the Fisher matrix is given by 
\begin{equation}
G(\theta) = \mathbb{E}_{s_t, a_t \sim p_t} \left[ \nabla_\theta \log \pi_\theta(a_t \mid s_t) \, \nabla_\theta \log \pi_\theta(a_t \mid s_t)^\top \right],
\end{equation}
and the corresponding preconditioner is $M(\theta) = G(\theta)^{-1}$. The use of the Fisher Information Matrix can be interpreted as imposing a local norm on the parameter space, derived from a second-order approximation of the KL-divergence between trajectory distributions \cite{coolen2005theory, furston2016approximate}. While the Fisher information matrix captures curvature arising from the policy distribution, it does not represent the true Hessian of the objective $J(\theta)$  and therefore misses important second-order interactions in the expected return. To fully characterize the curvature of the objective, we consider the policy Hessian theorem, obtained by taking the second-order derivative of the REINFORCE objective \cite{maniyar2024crpn, furston2016approximate}.

\subsection{Policy Hessian Decomposition}
\label{sec:policy-hessian}

To fully characterize the curvature of the policy optimization objective, we consider the policy Hessian theorem \cite{baxter2001infinite, kakade2001natural, maniyar2024crpn, furston2016approximate}, obtained by differentiating the policy gradient objective a second time.

\begin{theorem}[Policy Hessian Theorem]
Consider a Markov decision process with objective $J(\theta)$ and induced trajectory distribution $p(\tau; \theta)$. For any given parameter vector $\theta \in \mathbb{R}^{d}$, the Hessian of the objective can be decomposed as:
\begin{equation}
\mathcal{H}(\theta) = \nabla^2 J(\theta) = \mathcal{H}_1(\theta) + \mathcal{H}_2(\theta) + \mathcal{H}_{12}(\theta) + \mathcal{H}_{12}^\top(\theta).
\label{eq:hessian-decomposition}
\end{equation}
\end{theorem}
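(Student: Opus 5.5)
We differentiate the likelihood-ratio form of the policy gradient a second time. Then we group the resulting terms according to whether derivatives hit the score function or the return. Throughout, we work with a finite horizon $H$ and assume that $\pi_\theta$ is twice differentiable in $\theta$. We also assume enough boundedness (finite $S,A$, or bounded rewards and score functions) to exchange $\nabla$ with the sum over trajectories.

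\textbf{Step 1: gradient.} Since $P$ and $p_i$ do not depend on $\theta$, we have $\nabla \log p(\tau;\theta)=\sum_{t=0}^{H-1}\nabla\log\pi_\theta(a_t\mid s_t)$. Writing $J(\theta)=\sum_\tau p(\tau;\theta)D(\tau)$ and using $\nabla p = p\,\nabla\log p$ gives
\begin{equation*}
\nabla J(\theta)=\mathbb{E}_{\tau}\Big[\sum_{t=0}^{H-1}\gamma^t R(s_t,a_t)\sum_{u=0}^{t}\nabla\log\pi_\theta(a_u\mid s_u)\Big].
\end{equation*}
Here causality drops the future score terms. Conditioning on $(s_t,a_t)$ and regrouping yields the state–action form $\nabla J(\theta)=\sum_{s,a}\rho_\gamma(s,a;\theta)\,Q(s,a;\theta)\,\nabla\log\pi_\theta(a\mid s)$, with $Q$ evaluated at the same horizon-truncated time index.

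\textbf{Step 2: Hessian.} We differentiate this expression once more with the product rule. There are three places where $\theta$ enters: the occupancy measure $\rho_\gamma$ (equivalently, the trajectory density), the score $\nabla\log\pi_\theta$, and $Q(\cdot;\theta)$. Differentiating the density again by the likelihood-ratio trick produces an outer product of scores. Differentiating the score produces $\nabla^2\log\pi_\theta$. Differentiating $Q$ produces $\nabla Q$. We then define:
\begin{align*}
\mathcal{H}_1(\theta)&=\sum_{s,a}\rho_\gamma(s,a;\theta)\,Q(s,a;\theta)\Big[\nabla\log\pi_\theta(a\mid s)\nabla\log\pi_\theta(a\mid s)^\top+\nabla^2\log\pi_\theta(a\mid s)\Big]\\
&\quad+\text{(cross-time outer products of scores weighted by } Q),\\
\mathcal{H}_2(\theta)&=\sum_{s,a}\rho_\gamma(s,a;\theta)\,Q(s,a;\theta)\,\nabla^2\log\pi_\theta(a\mid s),\\
\mathcal{H}_{12}(\theta)&=\sum_{s,a}\rho_\gamma(s,a;\theta)\,\nabla\log\pi_\theta(a\mid s)\,\nabla Q(s,a;\theta)^\top .
\end{align*}
It is cleaner to follow the convention of \cite{furston2016approximate}. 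There, $\mathcal{H}_1$ collects the score outer-product terms $\mathbb{E}[\,Q\,\nabla\log p\,\nabla\log p^\top]$. $\mathcal{H}_2$ collects the $\nabla^2\log\pi_\theta$ terms weighted by $Q$. The mixed terms, in which one derivative falls on $Q$ and one on the policy, form $\mathcal{H}_{12}$.

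\textbf{Step 3: symmetry.} The transposed term $\mathcal{H}_{12}^\top$ appears because the full Hessian is symmetric. Concretely, we differentiate $J=\sum_{s,a}\rho_\gamma\pi_\theta Q$-type expressions symmetrically: we write $\partial_i\partial_j$ and note that the mixed contributions "score in $i$, $\nabla Q$ in $j$" and "score in $j$, $\nabla Q$ in $i$" both occur. This is easiest to see by expanding $\nabla^2\big(p(\tau;\theta)D(\tau)\big)$ at the per-time-step level. The terms in which $\nabla^2$ acts on the past-score product give $\mathcal{H}_1+\mathcal{H}_2$. The terms pairing a current score with the derivative of the future return give $\mathcal{H}_{12}$ and its transpose.

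\textbf{Main obstacle.} The hardest step is the bookkeeping in Step 3. Derivatives of future-time scores must be correctly re-expressed as $\nabla Q(s_t,a_t;\theta)$. This uses $\nabla Q(s,a)=\mathbb{E}\big[\sum_{u>t}\gamma^{u-t}R\sum_{v=t+1}^{u}\nabla\log\pi_\theta(a_v\mid s_v)\mid s_t=s,a_t=a\big]$. We must also make sure that every cross-time score product is counted exactly once, in either $\mathcal{H}_1$ or $\mathcal{H}_{12}+\mathcal{H}_{12}^\top$, and that the split is symmetric. We would first verify the identity for $H=2$ by explicit expansion, and then induct on $H$ using the Bellman recursion $Q=R+\gamma\,\mathbb{E}[V(s')]$.
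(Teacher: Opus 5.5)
Your route is the paper's route: the product rule on $\mathbb{E}_{\rho_\gamma}[Q\,\nabla\log\pi_\theta]$ with three sources of $\theta$-dependence, plus an appeal to symmetry for $\mathcal{H}_{12}^\top$. As written it has a genuine gap, and the gap sits where the paper's own derivation also only asserts the transpose. The theorem fixes $\mathcal{H}_1=\mathbb{E}_{\rho_\gamma}[Q\,\nabla\log\pi_\theta\,\nabla\log\pi_\theta^\top]$ with same-step scores only. Your Step 2 $\mathcal{H}_1$ also contains $\nabla^2\log\pi_\theta$, which double counts $\mathcal{H}_2$. It also contains cross-time score products. The trajectory-score version $\mathbb{E}[Q\,\nabla\log p\,\nabla\log p^\top]$ is not the theorem's $\mathcal{H}_1$ either. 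Symmetry of $\nabla^2 J$ cannot create an extra summand. If the cross-time products stay in $\mathcal{H}_1$ and you also add $\mathcal{H}_{12}^\top$, you count the same quantity twice. If you drop them, nothing in your argument produces $\mathcal{H}_{12}^\top$. The Step 3 claim that the past-score outer product yields $\mathcal{H}_1+\mathcal{H}_2$ is also false. Since $D(\tau)$ does not depend on $\theta$, there is no separate ``derivative of the future return'' at trajectory level. The off-diagonal part of that outer product is precisely $\mathcal{H}_{12}+\mathcal{H}_{12}^\top$.

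The missing idea is that differentiating the occupancy measure produces the score of the \emph{entire past}, not just the current step. Set $g_t=\nabla\log\pi_\theta(a_t\mid s_t)$, $R_t=R(s_t,a_t)$, and use time-indexed $Q_t=Q(s_t,a_t;\theta)$ as you noted. Then $\nabla\Pr(s_t=s,a_t=a)=\mathbb{E}[\mathbf{1}\{s_t=s,a_t=a\}\sum_{u\le t}g_u]$, so the occupancy term is
\begin{equation*}
\sum_t\gamma^t\,\mathbb{E}\Big[Q_t\,g_t\Big(\sum_{u\le t}g_u\Big)^{\!\top}\Big]
=\mathcal{H}_1+\sum_{u<t}\gamma^t\,\mathbb{E}\big[Q_t\,g_t\,g_u^\top\big].
\end{equation*}
Apply your own identity for $\nabla Q$, swap the sums, and condition on the history through $a_v$. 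This gives $\mathcal{H}_{12}=\sum_u\gamma^u\,\mathbb{E}[g_u\nabla Q_u^\top]=\sum_{u<v}\gamma^v\,\mathbb{E}[Q_v\,g_u g_v^\top]$, which is exactly the transpose of the leftover sum. So $\nabla\rho_\gamma$ contributes $\mathcal{H}_1+\mathcal{H}_{12}^\top$, $\nabla Q$ contributes $\mathcal{H}_{12}$, and the derivative of the score contributes $\mathcal{H}_2$. Every cross-time product lands in $\mathcal{H}_{12}+\mathcal{H}_{12}^\top$ and none in $\mathcal{H}_1$, so no induction on $H$ is needed. Equivalently, at trajectory level $\nabla^2J=\sum_t\gamma^t\,\mathbb{E}[R_t((\sum_{u\le t}g_u)(\sum_{v\le t}g_v)^\top+\sum_{u\le t}\nabla g_u)]$. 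Its $u=v$, $u<v$, $u>v$ and $\nabla g_u$ parts are $\mathcal{H}_1$, $\mathcal{H}_{12}$, $\mathcal{H}_{12}^\top$ and $\mathcal{H}_2$, respectively.
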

in which the components $\mathcal{H}_1, \mathcal{H}_2$ and $\mathcal{H}_{12}$ are given by:
\begin{align}
\mathcal{H}_1(\theta) &= \mathbb{E}_{\rho_{\gamma}} \left[
Q(s_t, a_t ; \theta)\, \nabla_\theta \log \pi_\theta(a_t \mid s_t)\, \nabla_\theta \log \pi_\theta(a_t \mid s_t)^\top
\right], \\
\mathcal{H}_2(\theta) &= \mathbb{E}_{\rho_{\gamma}} \left[
Q(s_t, a_t; \theta)\, \nabla_\theta^2 \log \pi_\theta(a_t \mid s_t)
\right], \\
\mathcal{H}_{12}(\theta) &= \mathbb{E}_{\rho_{\gamma}} \left[
\nabla_\theta \log \pi_\theta(a_t \mid s_t)\, \nabla_\theta^\top Q(s_t, a_t; \theta)
\right].
\end{align}
where $\mathbb{E}_{\rho_\gamma}[\cdot] = \sum_{s,a \in \mathcal{S} 
\times \mathcal{A}} \rho_{\gamma}(s,a;\theta)(\cdot)$. Here, $\mathcal{H}_1$ corresponds to the outer-product term, $\mathcal{H}_2$ captures the intrinsic curvature of log-policy, and $\mathcal{H}_{12}$ represents interaction terms involving gradient of state-action value function. Each term is weighted by the state--action value $Q(s_t, a_t;\theta)$.

\subsection{Actor--Critic Methods}

In the policy gradient methods especially in a monte carlo setup, the gradients are estimated using sampled trajectories, which leads to high variance due to the dependence on a long-horizon. A central challenge in policy optimization is therefore the estimation of the action--value function $Q^\pi(s,a)$.

Actor--critic methods address this issue by introducing a separate function parameterization, known as the \emph{critic}, to approximate the action--value function. The policy, referred to as the \emph{actor}, is parameterized by $\pi_\theta(a \mid s)$ and is updated using gradients of the form
\begin{equation}
\nabla J(\theta) = \mathbb{E}\left[ Q^{c}(s,a)\, \nabla_\theta \log \pi_\theta(a \mid s) \right],
\end{equation}
where $Q^{c}(s,a)$ denotes the critic’s value estimate. By replacing Monte Carlo returns with learned value estimates, actor--critic methods significantly reduce variance while retaining the direction.
\\ \\
\noindent\textbf{Two-timescale framework:} A common formulation employs a two-timescale stochastic approximation framework \cite{konda2000actorcritic, borkar1997stochastic}, in which the critic parameters are updated on a faster timescale than the actor. Intuitively, the critic rapidly adapts to the current policy, providing accurate value approximation that guide the slower policy updates. This separation of timescales enables stable learning and forms the foundation for many modern policy optimization algorithms.

\section{Method}
\subsection{Policy Hessian and HVP formulation}
While the second-order updates incorporate curvature information and can accelerate convergence, explicitly computing the Hessian incurs $\mathcal{O}(d^{3})$ complexity. This motivates the use of Hessian--Vector Products \cite{maniyar2024crpn} which enable efficient computation of curvature without forming the full hessian. In the following, we reformulate the policy Hessian decomposition in terms of HVPs for efficient implementation. For simplicity, we denote the state-action value function $Q(s,a;\theta)$ by $Q$, and the policy $\pi_{\theta}(a_t \mid s_t)$ by $\pi_{\theta}$. We further denote the discounted state--action occupancy measure $\rho_{\gamma}(s,a;\theta)$ using $\rho_{\gamma}$. 
\\ \\
\noindent\textbf{Hessian--Vector Product (HVP) formulation:}
To obtain a second-order update, we consider a Taylor approximation of the objective around the current policy parameter $\theta$:
\begin{equation}
J(\theta+\Delta)
=
J(\theta)
+
\nabla J(\theta)^\top \Delta
+
\frac{1}{2}\Delta^\top \nabla^2 J(\theta)\Delta
+
\mathcal{O}(\|\Delta\|^3).
\end{equation}
Let $\tilde{g}$ be the stochastic estimate of policy gradient $g$ derived as given below:
\begin{equation}
  g = \nabla J(\theta) = \sum_{s,a} \rho_\gamma(s,a;\theta)\, Q^\pi(s,a;\theta)\,\nabla \log \pi_\theta(a|s) = \mathbb{E}_{\rho_\gamma} \left[ Q^\pi \nabla \log \pi \right].
\label{eq:policygradient}
\end{equation}
Differentiating \eqref{eq:policygradient} with respect to $\theta$ yields the policy hessian $\mathcal{H} = \nabla^2 J(\theta)$ which admits the decomposition in \eqref{eq:hessian-decomposition} as described in Section \ref{sec:policy-hessian}. Let $\tilde{\mathcal{H}}$ denote stochastic estimates of $\mathcal{H}$ derived from \eqref{eq:hessian-decomposition} and let $\lambda I$ be a ridge term (damping) to stabilize the curvature estimate, where $-\infty < \lambda < 0$ to ensure negative-definiteness for maximization. The update direction is obtained by approximately maximizing the below local quadratic model (because $J(\theta)$ doesn't have any effect on $\Delta$):
\begin{equation}
\Delta^\star
=
\arg\max_{\Delta}
\left[
\tilde{g}^\top \Delta
+
\frac{1}{2}
\Delta^\top
(\tilde{\mathcal{H}}+\lambda I)
\Delta
\right].
\end{equation}
Due to the computational complexity, we explicitly construct $\tilde{\mathcal{H}}$ by computing its action on the current displacement $\Delta$ using Hessian--vector products \cite{maniyar2024crpn} given by 
$\tilde{\mathcal{H}}\Delta = \nabla_\theta \left( \tilde{g}^\top \Delta \right)$.Thus, the inner-loop update for $\Delta$ is given by
\begin{equation}
\Delta_{k+1}
=
\Delta_k
+
\alpha
\left[
\tilde{g}_k
+
(\tilde{\mathcal{H}}_k+\lambda I)\Delta_k
\right].
\label{eq:delta-inner-loop}
\end{equation}
After the inner loop, the policy parameters are updated as $\theta \leftarrow \theta + \Delta$, yielding a stable second-order update that incorporates curvature information. Using the Hessian decomposition, we express each component in vector form. In particular, the outer-product (Fisher-like) term $\mathcal{H}_1$ acting on $\Delta$ is given by 
\begin{equation}
\mathcal{H}_1\cdot \Delta
=
\mathbb{E}_{\rho_{\gamma}}
\left[
Q\,
\nabla \log \pi_\theta\,
\big( \nabla \log \pi_\theta^\top \Delta \big)
\right]
\end{equation}
However, for the intrinsic term $\mathcal{H}_2 = \mathbb{E}_{\rho_{\gamma}}
\left[
Q\,
\nabla_\theta^{2} \log \pi_\theta\,\right]$ a direct Hessian-Vector product form computation is subtle because $Q = Q(s,a;\theta)$ also depends on the policy parameters. In an other way, we can represent the policy hessian decomposition in the below form combining the $\mathcal{H}_2$ and $\mathcal{H}_{12}$ into one term:
\begin{equation}
\begin{aligned}
\nabla^2 J(\theta) = \mathbb{E}_{\rho_\gamma} \left[
Q\,\nabla \log \pi_{\theta}\, \nabla \log \pi_{\theta}^\top
+ \nabla \left( Q\,\nabla \log \pi_{\theta} \right)
\right]
\end{aligned}
\label{eq:hessian-combined-decomposition}
\end{equation}
By the product rule, the second term in Eq \eqref{eq:hessian-combined-decomposition} would be 
\begin{equation}
(\mathcal{H}_2 + \mathcal{H}_{12})\cdot\Delta = \nabla (Q\,\nabla \log \pi_{\theta}^\top\Delta),
\label{eq:H1H2-HVPform}
\end{equation}
rather than $\mathcal{H}_2\cdot\Delta$ alone. This coupling makes it difficult to separately estimate the intrinsic curvature term without incurring the interaction term $\mathcal{H}_{12}$. The motivation for isolating and approximating $\mathcal{H}_2$ is discussed in the next section, where we analyze the limitations of each component in the Hessian decomposition. 

\subsection{Limitations of Full Hessian Curvature}
\label{subsec:limitations}
Despite $O(d^{3})$ complexity, the full Hessian also exhibits structural issues that hinder performance: 
\begin{itemize}
    \item \textbf{High-variance interaction term:} The cross term $\mathcal{H}_{12}$ captures the sensitivity of state-action value function to policy parameters. Since $Q$ depends on full trajectories, its gradient $\nabla Q$ introduces significant variance, making the curvature estimate unstable and difficult to compute reliably. 
    \item \textbf{Positive definite bias $\mathcal{H}_1$:} The outer-product term $\mathcal{H}_1$ contributes a positive semi-definite component (under non-negative weighting), which introduces convex structure into the Hessian. This conflicts with the concave structure required for stable maximization, leading to distorted curvature estimates and suboptimal directions. 
\end{itemize} 
Together, these effects make the full Hessian indefinite and poorly conditioned, making direct second order approximation unstable in practice. These challenges motivate the need for structured approximations \cite{furston2016approximate} which we address in the following sections. These structural issues of second-order curvature in policy optimization have been noted in prior work.

\subsection{Two-Timescale Approximation and Actor-Critic Structure}
\label{subsec:TTAC}
To stabilize the Hessian, we first address the high-variance interaction term $\mathcal{H}_{12}$:
\begin{itemize}
    \item \textbf{Variance limitation:} As mentioned in Sec \ref{subsec:limitations}, in policy optimization, the state-action value function depends on policy parameters, i.e., $\nabla_\theta Q \neq 0$, making $\mathcal{H}_{12}$ difficult to estimate.
    \item \textbf{HVP coupling:} In addition to its high variance, $\mathcal{H}_{12}$ also complicates the computation of HVP. Differentiating the product $Q \nabla \log \pi$ shown in \eqref{eq:H1H2-HVPform} introduces the cross term $(\nabla Q)(\nabla \log \pi)  = \mathcal{H}_{12}$. Since $\nabla Q$ depends on long-horizon trajectory distributions. Therefore, computing $(\mathcal{H}_2 + \mathcal{H}_{12})\cdot \Delta$ without HVPs becomes computationally expensive and statistically unstable, preventing a clean isolation of the intrinsic curvature term $\mathcal{H}_2$.
    \item \textbf{Two-timescale insight:}
Under the two-timescale actor--critic framework
\cite{konda2000actorcritic,borkar1997stochastic},
the action-value function is estimated using a separate
critic network parameterized by $\omega$. Since the critic
evolves on a faster timescale than the actor, it rapidly
tracks the current policy while the actor changes
gradually. Consequently, during actor updates, the critic
may be treated as approximately stationary with respect
to the actor parameters, i.e., locally quasi-stationary
or nearly converged \cite{konda2000actorcritic}.
This suggests that the sensitivity of the critic estimate
with respect to the actor parameters is small, implying $
\nabla_\theta Q^c(s,a;\omega) \approx 0$. As a result, the interaction term $\mathcal{H}_{12}$
appearing in the policy Hessian decomposition is expected
to be negligible. 
\end{itemize} 

To formalize this observation, we make
the following \textit{assumptions}. Let $\omega$ denote the critic
parameters and let $\|\cdot\|$ denote the Euclidean norm
for vectors and the induced operator norm for matrices.

\noindent \textbf{(A1) Smooth Objective.}
The objective function \(J:\mathbb{R}^{d}\rightarrow\mathbb{R}\) is twice continuously differentiable with \(L\)-Lipschitz continuous gradients and \(\rho\)-Lipschitz continuous Hessian. That is, there exist constants \(L,\rho>0\) such that, for all \(\theta,\theta'\in\mathbb{R}^{d}\),
\begin{equation}
\|\nabla J(\theta)-\nabla J(\theta')\|
\le
L\|\theta-\theta'\|,
\end{equation}
and
\begin{equation}
\|\nabla^2J(\theta)-\nabla^2J(\theta')\|
\le
\rho\|\theta-\theta'\|.
\end{equation}

\noindent\textbf{(A2) Parameterization regularity.}
The policy $\pi_\theta(a|s)$ is twice continuously
differentiable with respect to $\theta$. Consequently,
for every parameter vector $\theta$ and every state--action
pair $(s,a)$, there exist constants
$0<C_g,C_H<\infty$ such that
\begin{equation}
\|\nabla_\theta \log \pi_\theta(a|s)\|
\leq C_g,
\qquad
\|\nabla_\theta^2 \log \pi_\theta(a|s)\|
\leq C_H,
\quad
\forall (s,a)\in\mathcal{S}\times\mathcal{A}.
\end{equation}
\noindent\textbf{(A3) Smooth critic.}
The critic $Q^c(s,a;\omega)$ is differentiable with
respect to $\omega$ and its parameter gradient is
uniformly bounded:
\begin{equation}
\|\nabla_\omega Q^c(s,a;\omega)\|
\leq C_c,
\quad
\forall (s,a)\in\mathcal{S}\times\mathcal{A}.
\end{equation}
\noindent\textbf{(A4) Critic tracking.}
For every actor parameter $\theta$, let
$\omega^\star(\theta)$ denote the optimal critic
parameter corresponding to policy $\pi_\theta$.
Under two-timescale actor--critic updates, the critic
tracks this optimum with bounded error:
\begin{equation}
\|
\omega-\omega^\star(\theta)
\|
\leq
\epsilon_c,
\end{equation}
where $\epsilon_c$ is small whenever the critic
timescale is sufficiently faster than the actor
timescale.

\noindent\textbf{(A5) Slow actor variation.}
During actor updates, the critic parameter $\omega$
is treated as approximately fixed with respect to
the actor parameter $\theta$:
\begin{equation}
\frac{\partial \omega}{\partial \theta}
\approx 0.
\end{equation}
Based on Assumptions A1--A5, the following proposition provides a formal justification for the two-timescale insight.
\begin{proposition}
\label{prop:h12_bound}
Under Assumptions A1--A5, the actor--critic interaction term
\begin{equation}
\mathcal{H}_{12}^{AC}(\theta)
=
\mathbb{E}_{\rho_\gamma}
\left[
\nabla_\theta \log \pi_\theta(a|s)
\nabla_\theta Q_c(s,a;\omega)^\top
\right]
\label{eq:actor-critic-interaction-term}
\end{equation}
satisfies
\begin{equation}
\|H_{12}^{AC}(\theta)\| \leq C_g C_c \left\|\frac{\partial \omega}{\partial \theta}\right\|.
\end{equation}
In particular, under the two-timescale quasi-stationary approximation
$\partial \omega / \partial \theta \approx 0$, we obtain $
H_{12}^{AC}(\theta) \approx 0$. Consequently,
\begin{equation}
\mathcal{H}
\approx
\mathcal{H}_1
+
\mathcal{H}_2.
\end{equation}
\end{proposition}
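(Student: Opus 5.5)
The plan is to use the chain rule for the critic's dependence on the actor parameters, then a pointwise bound integrated against the occupancy measure. Since the critic $Q^c(s,a;\omega)$ depends on $\theta$ only through $\omega=\omega(\theta)$, its actor gradient is
\begin{equation}
\nabla_\theta Q^c(s,a;\omega)=\Big(\frac{\partial \omega}{\partial \theta}\Big)^{\top}\nabla_\omega Q^c(s,a;\omega),
\end{equation}
which is well defined by (A3) together with differentiability of $\omega(\theta)$. The first step is to substitute this into \eqref{eq:actor-critic-interaction-term}. Because $\partial\omega/\partial\theta$ does not depend on $(s,a)$, we can pull it outside the expectation:
\begin{equation}
\mathcal{H}_{12}^{AC}(\theta)=\mathbb{E}_{\rho_\gamma}\!\left[\nabla_\theta\log\pi_\theta(a|s)\,\nabla_\omega Q^c(s,a;\omega)^\top\right]\frac{\partial \omega}{\partial \theta}.
\end{equation}

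The second step bounds the operator norm. Each integrand is a rank-one matrix $uv^\top$, whose operator norm equals $\|u\|\|v\|$. By (A2) and (A3) this is at most $C_gC_c$ uniformly over $(s,a)$. We then apply the triangle inequality, in its Jensen form, to the weighted sum $\mathbb{E}_{\rho_\gamma}[\cdot]=\sum_{s,a}\rho_\gamma(\cdot)$, and use submultiplicativity:
\begin{equation}
\|\mathcal{H}_{12}^{AC}(\theta)\|\le \Big(\sum_{s,a}\rho_\gamma(s,a;\theta)\Big)\,C_gC_c\,\Big\|\frac{\partial\omega}{\partial\theta}\Big\|.
\end{equation}

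\textbf{The main obstacle is the normalization.} The occupancy measure has total mass $\sum_{t<H}\gamma^t\le(1-\gamma)^{-1}$, not $1$, so a literal computation yields the constant $C_gC_c/(1-\gamma)$. I would handle this in one of two ways.
\begin{itemize}
\item[(i)] Interpret $\mathbb{E}_{\rho_\gamma}$ as the expectation under the normalized distribution $(1-\gamma)\rho_\gamma$, which gives exactly the stated $C_gC_c\|\partial\omega/\partial\theta\|$.
\item[(ii)] Absorb the horizon factor into $C_c$.
\end{itemize}
I would state the bound with the normalization made explicit. A second subtlety is that (A4) and (A1) are not used in the inequality itself. (A4) only justifies that $\omega$ stays near $\omega^\star(\theta)$, so that $Q^c$ is a meaningful estimate of $Q$. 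This is what lets $\mathcal{H}_1$ and $\mathcal{H}_2$ be evaluated with $Q^c$ in place of $Q$.

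Finally, under (A5), $\|\partial\omega/\partial\theta\|\approx0$, so the bound gives $\mathcal{H}_{12}^{AC}\approx0$, and likewise for its transpose, since the two have the same operator norm. Substituting into the decomposition \eqref{eq:hessian-decomposition} from the Policy Hessian Theorem, with $Q$ replaced by the critic, leaves $\mathcal{H}\approx\mathcal{H}_1+\mathcal{H}_2$. The error is at most $2C_gC_c\|\partial\omega/\partial\theta\|$ in operator norm, up to the normalization factor noted above.
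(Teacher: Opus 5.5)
Your proposal is correct and follows the paper's own argument: the chain rule $\nabla_\theta Q^c=(\partial\omega/\partial\theta)^\top\nabla_\omega Q^c$, then the pointwise bounds from (A2)--(A3) summed against $\rho_\gamma$, with the total mass of $\rho_\gamma$ removed by normalization, exactly as the paper's parenthetical ``after normalization'' does. Your bookkeeping is slightly tighter: you pull the Jacobian out of the expectation and use submultiplicativity, whereas the paper writes $\|u\,v^\top W\|=\|u\|\,\|v\|\,\|W\|$ as an equality, which in general holds only as $\le$ (this does not affect the final bound).
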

Therefore the above proposition justifies the two-timescale insight, implying that the interaction term becomes
negligible, yielding the interaction-free Hessian
approximation $\widetilde{\mathcal H} = \mathcal H_1+\mathcal H_2$.
This approximation enables a stable and efficient
Hessian--vector product computation without
incurring the high variance associated with
$\mathcal H_{12}$. A detailed proof of Proposition~\ref{prop:h12_bound}
is provided in Appendix~\ref{app:two-timescale-insight}.

\subsection{Second-Order Approximations}
As mentioned in Sec \ref{subsec:TTAC}. the full Hessian contains a high variance interaction term $\mathcal{H}_{12}$, which is difficult to estimate reliably and complicates Hessian vector product (HVP) computation. Therefore, we approximate the Hessian using the \textit{two-timescale actor-critic method}, resulting in a interaction free Hessian estimate given by $\tilde{\mathcal{H}} = \mathcal{H}_1 + \mathcal{H}_2$.
We now consider standard approximations of the $\tilde{\mathcal{H}}$:

\subsubsection{ACGN1: Advantage-based Gauss--Newton approximation.}
\label{subsec:acgn1}
We know that $Q(s,a;\theta) =  V(s;\theta) + A(s,a;\theta)$ from Eq \eqref{eq:generalized-advantage-estimate}. For simplicity we represent $Q(s,a;\theta), V(s;\theta)$ and $A(s,a;\theta)$ as $Q, V$ and $A$ respectively. Let \(\mathcal{A}_1,\mathcal{A}_2\) and \(\mathcal{V}_1,\mathcal{V}_2\) denote the curvature components obtained from the Hessian decomposition in Theorem~1 by replacing the  \(Q\) with \(A\) and \(V\), respectively. Explicitly,
\begin{equation}
\begin{aligned}
\mathcal{V}_1(\theta)
&=
\mathbb{E}_{\rho_\gamma}
\left[
V(s_t;\theta)\,
\nabla_\theta \log \pi_\theta(a_t|s_t)\,
\nabla_\theta \log \pi_\theta(a_t|s_t)^\top
\right], \\
\mathcal{V}_2(\theta)
&=
\mathbb{E}_{\rho_\gamma}
\left[
V(s_t;\theta)\,
\nabla_\theta^2 \log \pi_\theta(a_t|s_t)
\right], \\
\mathcal{A}_1(\theta)
&=
\mathbb{E}_{\rho_\gamma}
\left[
A(s_t,a_t;\theta)\,
\nabla_\theta \log \pi_\theta(a_t|s_t)\,
\nabla_\theta \log \pi_\theta(a_t|s_t)^\top
\right], \\
\mathcal{A}_2(\theta)
&=
\mathbb{E}_{\rho_\gamma}
\left[
A(s_t,a_t;\theta)\,
\nabla_\theta^2 \log \pi_\theta(a_t|s_t)
\right].
\end{aligned}
\label{eq:value-advantage-hessian-approximations}
\end{equation}
Applying linearity of expectation to Eq \eqref{eq:value-advantage-hessian-approximations},
\begin{equation}
\mathcal{H}_1=\mathcal{V}_1+\mathcal{A}_1,
\qquad
\mathcal{H}_2=\mathcal{V}_2+\mathcal{A}_2.
\end{equation}
Under standard Fisher regularity conditions $\mathcal{V}_1 = -\mathcal{V}_2$ \cite{furston2016approximate} which implies the curvature terms defined using state-action function $Q$ can be expressed in terms of the advantage function A. Hence, the interaction-free Hessian admits the advantage-based representation leading to the advantage-based Gauss--Newton approximation
\begin{equation}
\tilde{\mathcal{H}}_{\text{ACGN1}} = \mathcal{H}_1+\mathcal{H}_2
=
\mathcal{A}_1+\mathcal{A}_2.
\label{eq:gauss-newton1-approximation}
\end{equation}
which incorporates both outer-product ($\mathcal{A}_1$) and intrinsic curvature ($\mathcal{A}_2$) components, while benefiting from variance reduction through advantage estimation. Therefore based on this approximation, the corresponding Hessian-Vector product would be the following:
\begin{equation}
    \tilde{\mathcal{H}}\cdot\Delta = \tilde{\mathcal{H}}_{\text{ACGN1}}\cdot\Delta = (\mathcal{A}_1 + \mathcal{A}_2)\cdot\Delta
\end{equation}
Additionally using advantage-based Hessian estimates reduces the scale of variance in the curvature approximation and the Hessian-vector products. With appropriately chosen small step sizes for the actor updates, this residual variance is less likely to induce instability and instead yields more reliable directions. We emphasize that, as mentioned in \cite{furston2016approximate}, this condition doers not imply negative-definiteness of the resulting curvature matrix $\mathcal{H}_{\text{ACGN1}}$ for reward maximization or guaranteed alignment with the true Newton direction; rather, it is an empirical observation that induced updates are often well behaved in the vicinity of the local optima.

\subsubsection{ACGN2: Intrinsic curvature approximation}
\label{subsec:acgn2}
An alternative approximation to address the structural bias introduced into the curvature by the outer-product term $\mathcal{H}_1$ is to discard $\mathcal{H}_1$ entirely and retain only the intrinsic curvature component $\mathcal{H}_2$. This yields the following intrinsic curvature Gauss-Newton approximation:
\begin{equation}
\tilde{\mathcal{H}}\cdot\Delta =
\tilde{\mathcal{H}}_{\text{ACGN2}}\cdot\Delta = \mathcal{H}_2\cdot\Delta
\end{equation}
Under log-concave policy parameterizations (e.g., softmax or Gaussian policies) \cite{furston2016approximate}, $\mathcal{H}_2$ is negative semi-definite and therefore preserves the concave curvature structure required for stable maximization of the objective. By removing the positive-semidefinite outer product component $\mathcal{H}_1$, ACGN2 avoids introducing convex curvature bias into the actor update. The Hessian-Vector product of ACGN2 approximation would be given by:
\begin{equation}
\tilde{\mathcal{H}}_{\text{ACGN2}}\cdot\Delta = \mathcal{H}_2\cdot\Delta
\end{equation}
While $\mathcal{H}_2$ is theoretically negative semi-definite under idealized conditions, in practice the stochastic estimation of Hessian-vector products and numerical approximations (e.g., conjugate gradient) may introduce positive eigen-values rendering an indefinite $\mathcal{H}_2$, yielding directions that do not strictly preserve this property. To account for this, we filter the updates based on the observed curvature such that only negative semi-definite $\mathcal{H}_2$ participate in the updates, ensuring stable optimization. Additionally we also add a ridge term $\lambda I$ where $-\infty < \lambda < 0$ to ensure $\mathcal{H}_2 + \lambda I$ is negative definite. Overall, this formulation leverages the intrinsic geometry of the policy while remaining robust to practical estimation noise, leading to stable and reliable updates.

\section{Experimental Setup and Results}

We conduct experiments to evaluate whether second-order methods yield improved convergence relative to  first-order and natural gradient baselines across diverse benchmark environments. Our evaluation spans both discrete control tasks, including CartPole and LunarLander \cite{brockman2016openai}, and continuous control domains from MuJoCo, including Reacher and Humanoid \cite{todorov2012mujoco}. All experiments are conducted over 5 independent random seeds. We evaluate performance in terms of sample efficiency and convergence behavior.

\subsection{Environment with Discrete action-spaces}

We evaluate the second order two-timescale actor-critic methods using environments with discrete action spaces agent selects from a finite set $\mathcal{A}$ of predefined control actions (discrete) rather than continuous-valued forces, in each step of the episode.

First, we perform experiments on the CartPole environment provided by the OpenAI gym \cite{brockman2016openai}, where the observation space is defined as $\mathcal{S} \subset \mathbb{R}^4$, consisting of the state vector $(x, \dot{x}, \omega, \dot{\omega})$. Here, $x$ denotes the cart position, $\dot{x}$ is the cart velocity, $\omega$ the pole angle, and $\dot{\omega}$ denotes the pole angular velocity. These variables are bounded according to the environment specifications. The initial state at the beginning of each episode is sampled independently from the uniform interval $(-0.05, 0.05)$, with randomness controlled via different seeds across runs. The action space is discrete given by $\mathcal{A} = \{0, 1\}$, where the actions correspond to applying a fixed magnitude force to push the cart either left or to the right. 

The reward function assigns a value of $+1$ at every time step until termination. An episode terminates when the pole angle exceeds a threshold i.e $|\omega| > 12^\circ$, or when the cart position exceeds $|x| > 2.4$. Consequently the cumulative reward is directly proportional to the episode length, since a reward of $+1$ is received at each time step until termination. In our setup, we predefined the maximum episode length as 500 steps, with a maximum achievable reward of 500.

\begin{figure}[h]
    \centering
    \includegraphics[width=0.8\linewidth]{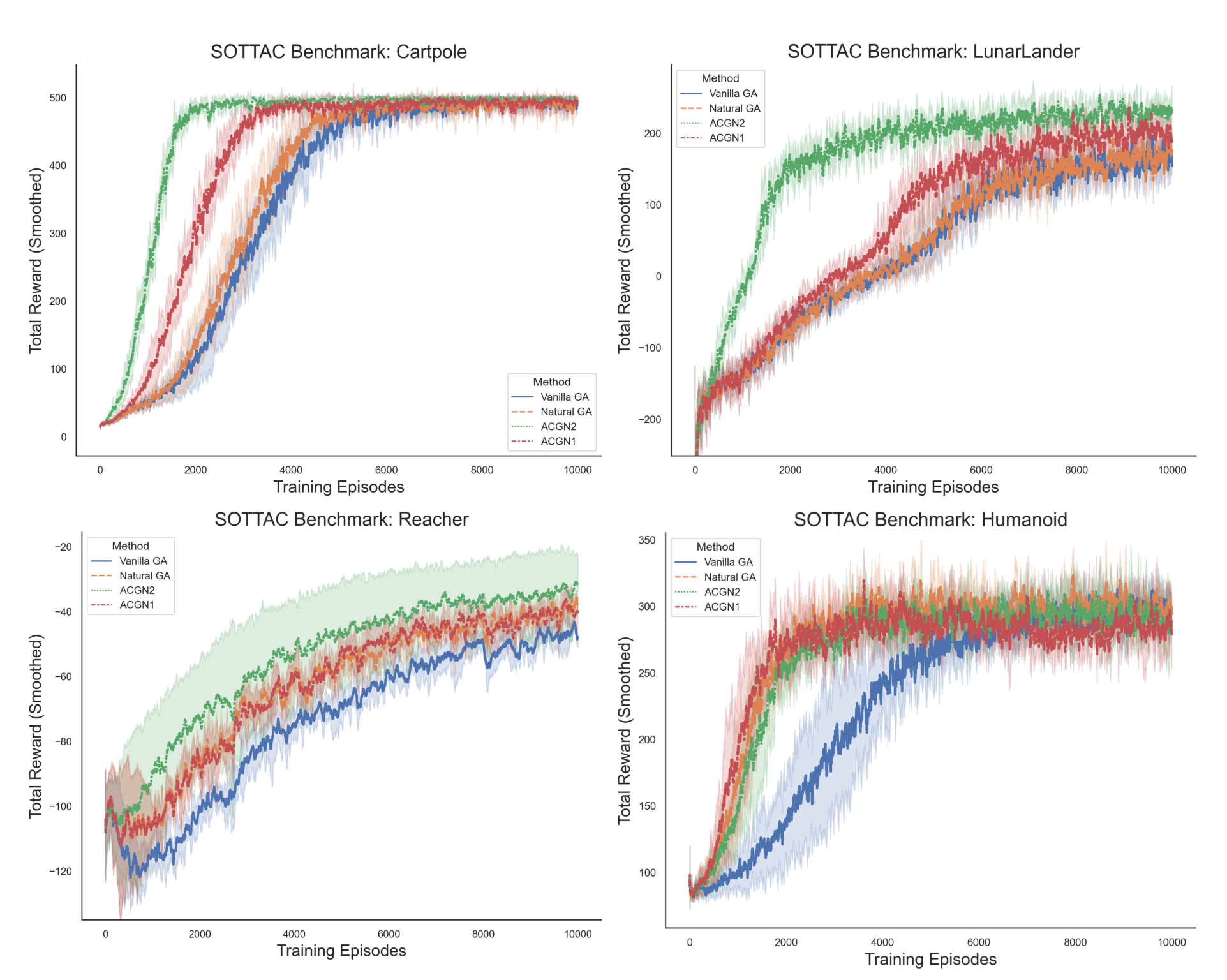}
    \caption{SOTTAC (Second-Order Two-Timescale Actor-Critic) benchmark results across four RL environments: CartPole and LunarLander (discrete action spaces), and Reacher and Humanoid (continuous action spaces). We compare Vanilla Gradient Ascent (REINFORCE, blue), Natural Gradient Ascent (green), and the proposed second-order variants ACGN1 (red) and ACGN2 (orange).}
    \label{fig:sottac_benchmarks}
\end{figure}

We also evaluate our methods on the Lunar-Lander environment from OpenAI gym \cite{brockman2016openai}. Here, the observation space is defined as $\mathcal{S} \subset \mathbb{R}^6 \times \{0,1\}^2$, consisting of the state vector $(x, y, \dot{x}, \dot{y}, \theta, \omega, b_1, b_2)$, where $(b_1, b_2) \in \{0,1\}^2$, where $(x, y)$ denote the lander position, $(\dot{x}, \dot{y})$ the linear velocities, $\theta, \omega$ represent its angle and angular velocity, and $(b_1, b_2)$ are two booleans indicate whether each leg is in contact with the ground or not. The action space is discrete, given by $\mathcal{A} = \{0, 1, 2, 3\}$, where the actions correspond to firing the left orientation engine, main engine, right orientation engines or taking no action.

At each time step, a scalar reward is assigned, and the episode return is the sum of the step-wise rewards. The reward encourages stable landing by favoring proximity, low velocity, and upright orientation, while penalizing fuel usage. A bonus of $+10$ is given per leg contact and a terminal reward of $+100$ or $-100$ is assigned for successful landing or crash, respectively. Unlike CartPole, the LunarLander environment does not admit a fixed maximum reward due to its shaped reward structure. In practice, performance is evaluated based on average return, and the environment is considered solved when the mean reward over 100 consecutive episodes exceeds 200.

We perform simulation experiments by assuming the linear softmax (Gibbs) policy parameterization which is known to be log-concave, where the Hessian of the log probabilities is known to be negative-semidefinite. The latter parameterization provides the distribution to sample the action as follows:
\begin{equation}
    \pi(a \mid s ; \theta) =
    \frac{\exp(\phi(s,a)^\top \theta)}
    {\sum_{b \in \mathcal{A}} \exp(\phi(s,b)^\top \theta)} ,
\end{equation}
where $\phi(s,a)$ is a one-hot encoded feature vector w.r.t the discrete action $a$, with the state vector $s$ embedded in the corresponding action component, and $\theta$ is the trainable parameter vector of dimension $\text{dim}(\mathcal{S}) \times |\mathcal{A}|$.

Across CartPole and Lunar-Lander environments, ACGN2 consistently exhibits faster convergence relative and improved sample efficiency relative to the baseline models. ACGN1 also demonstrates improved performance over first-order and natural gradient approaches, although it converges slightly slower compared to ACGN2. This behavior can be attributed to the quality of curvature approximation. ACGN2 relies on the intrinsic curvature component in \eqref{eq:hessian-decomposition}. which preserves the negative semidefinite induced by log-concave policy. As a result it provides well-aligned second-order updates. In contrast, ACGN1 incorporates additional outer product term as in \eqref{eq:gauss-newton1-approximation} which introduces a convex component into the concave structure for stable maximization, resulting in slightly slower convergence. Natural gradient methods which rely solely on Fisher-information based updates which are more conservative leading to slower learning. Vanilla gradient ascent (REINFORCE) shows the slowest convergence due to the high-variance updates in absence of curvature information. Overall, these results highlight the importance of accurately capturing the intrinsic curvature of the policy landscape for efficient and stable policy optimization in the discrete action space.

\subsection{Environment with Continuous action-spaces}

In contrast to discrete action settings, continuous environments are characterized by action spaces $\mathcal{A} \subset \mathbb{R}^d$, where the agent selects continuous control inputs rather than choosing from a finite action set. We evaluate on Reacher and Humanoid (MuJoCo environments) \cite{todorov2012mujoco}.

The Reacher environment consists of a two-jointed robotic arm operating in a planar 2-D space. The objective is to move the end-effector (tip) of the arm to a target position. The reward is defined as the negative Euclidean distance between the end-effector and the target, along with an additional penalty on the magnitude of control inputs, encouraging precise and energy-efficient movements.

The Humanoid environment models a high-dimensional 3-D bipedal robot with articulated limbs, including a torso, legs, and arms. The goal is to learn stable locomotion by maximizing forward velocity while maintaining balance. The reward function primarily incentivizes forward movement, while incorporating penalties for large control inputs and deviations from an upright posture.

We use a Gaussian policy parametrized by a multi-layer perceptron (MLP). The policy is defined as
\begin{equation}
\pi(a \mid s; \theta) = \mathcal{N}(\mu_\theta(s), \sigma_\theta^2(s)),
\end{equation}
where both the mean $\mu_\theta(s)$ and standard deviation $\sigma_\theta(s)$ are learned functions of the state. Since the policy is not strictly log-concave, the curvature information provides noisy second-order updates. However, $\mu_\theta(s)$ is bounded using $\tanh$ activation and $\sigma_\theta(s)$ is clipped, to maintain numerical stability. As a result, we still observe consistent, albeit moderate improvements over the baselines.

In the Reacher environment, ACGN2 demonstrates relatively faster convergence, but with higher variance due to sensitivity to noisy curvature estimates arising from Hessian-vector computations. This variability also affects ACGN1, leading it to converge comparably to Natural Gradient Ascent. In the Humanoid environment, the higher dimensionality of the continuous action space further amplifies the effect of noisy curvature estimates. In this setting, ACGN1 and Natural Gradient exhibit relatively more stable and faster convergence, while ACGN2 shows comparable performance with slightly higher variability, but converges to a relatively higher final return. Nevertheless, all second-order methods consistently outperform REINFORCE, indicating the benefit of incorporating curvature information even under non-concave policies.

\section{Conclusion}
We showed that second-order approximations for two-timescale actor--critic methods effectively leverage curvature information for policy optimization, improving convergence and sample efficiency over first-order and natural gradient baselines across discrete and continuous control tasks. Future work will focus on scalable approximations and extending these curvature approximations to trust-region and proximal policy optimization methods such as TRPO~\cite{schulman2015trpo} and PPO~\cite{schulman2017ppo}. Such extensions could enable more accurate curvature-aware trust-region updates while preserving the stability guarantees of constrained and clipped policy optimization. Furthermore, they may provide a practical pathway for incorporating second-order information into large-scale reinforcement learning algorithms.

%
%
%
\newpage
\bibliography{refs}
\bibliographystyle{plainnat}

\newpage
\appendix
\input{appendix.tex}

\newpage

\end{document}

%% file: appendix.tex
\section{Derivation of the Policy Hessian Decomposition}

Here, we derive the policy Hessian decomposition used in Eq \eqref{eq:hessian-decomposition}. We begin with the standard policy gradient expression for the expected return:
\begin{equation}
\nabla J(\theta) = \mathbb{E}_{(s,a) \sim \rho_\gamma} 
\left[ Q^\pi(s,a) \nabla_\theta \log \pi_\theta(a|s) \right].
\end{equation}
Taking the second derivative with respect to $\theta$, we obtain:
\begin{equation}
\nabla^2 J(\theta) = \nabla_\theta \mathbb{E}_{\rho_\gamma} 
\left[ Q^\pi(s,a) \nabla_\theta \log \pi_\theta(a|s) \right].
\end{equation}
Using the product rule, we expand:
\begin{equation}
\begin{aligned}
\nabla^2 J(\theta) =
\mathbb{E}_{\rho_\gamma}
\left[
Q^\pi(s,a) \nabla^2_\theta \log \pi_\theta(a|s)
\right]
+
\mathbb{E}_{\rho_\gamma}
\left[
\nabla_\theta Q^\pi(s,a) \nabla_\theta \log \pi_\theta(a|s)^\top
\right]
\\
+
\mathbb{E}_{\rho_\gamma}
\left[
Q^\pi(s,a)
\nabla_\theta \log \pi_\theta(a|s)
\nabla_\theta \log \pi_\theta(a|s)^\top
\right].
\end{aligned}
\end{equation}
Additionally, differentiating the log-policy gradient term yields an asymmetric hessian due to the interaction-term $\mathcal{H}_{12}$. The interaction term contributes both
\[
E[\nabla_\theta Q^\pi (\nabla_\theta \log\pi)^T]
\]
and its transpose
\[
E[\nabla_\theta \log\pi (\nabla_\theta Q^\pi)^T],
\]
yielding the symmetric decomposition
\begin{equation}
\nabla^2 J(\theta) =
\mathcal{H}_1(\theta) + \mathcal{H}_2(\theta) + \mathcal{H}_{12}(\theta) + \mathcal{H}_{12}(\theta)^\top,
\end{equation}
where the components are defined as:
\begin{align}
H_1(\theta) &= 
\mathbb{E}_{\rho_\gamma}
\left[
Q^\pi(s,a)
\nabla_\theta \log \pi_\theta(a|s)
\nabla_\theta \log \pi_\theta(a|s)^\top
\right], \\
H_2(\theta) &=
\mathbb{E}_{\rho_\gamma}
\left[
Q^\pi(s,a)
\nabla^2_\theta \log \pi_\theta(a|s)
\right], \\
H_{12}(\theta) &=
\mathbb{E}_{\rho_\gamma}
\left[
\nabla_\theta \log \pi_\theta(a|s)
\nabla_\theta Q^\pi(s,a)^\top
\right].
\end{align}
Therefore, given above is the decomposition of the policy Hessian into (i) an outer-product term $H_1$, (ii) an intrinsic curvature term $H_2$, and (iii) interaction terms $H_{12}$ and $H_{12}^\top$.

\section{The Two-timescale Insight}
\label{app:two-timescale-insight}
Under the two-timescale actor-critic framework \cite{konda2000actorcritic, borkar1997stochastic}, $\mathcal{Q}$ is estimated using a separate function network known as the critic. The critic value estimate ($Q^{c}$) evolves on a faster timescale than the actor. As a result, during policy (actor) updates the critic can be treated as approximately stationary with respect to the actor parameters, i.e., almost converged (quasi-stationary) \cite{konda2000actorcritic}. This motivates the approximation $\nabla Q^{c} \approx 0$, yielding $H^{AC}_{12} \approx 0$ and simplifying the second term in \eqref{eq:hessian-combined-decomposition} (in HVP form \eqref{eq:H1H2-HVPform}) to just $\mathcal{H}_2\cdot\Delta$. In order to prove the above insight, we make assumptions regarding the smoothness of the parameterized policy $\pi_{\theta}$ and the parameterized critic $Q^{c}$. Let $\omega$ be the parameters of the critic network and let $\|\cdot\|$ denote the $l_{2}-$norm for vectors and operator norm for the matrices. 
\\ \\
\noindent \textbf{(A1) Smooth Objective.}
The objective function \(J:\mathbb{R}^{d}\rightarrow\mathbb{R}\) is twice continuously differentiable with \(L\)-Lipschitz continuous gradients and \(\rho\)-Lipschitz continuous Hessian. That is, there exist constants \(L,\rho>0\) such that, for all \(\theta,\theta'\in\mathbb{R}^{d}\),
\begin{equation}
\|\nabla J(\theta)-\nabla J(\theta')\|
\le
L\|\theta-\theta'\|,
\end{equation}
and
\begin{equation}
\|\nabla^2J(\theta)-\nabla^2J(\theta')\|
\le
\rho\|\theta-\theta'\|.
\end{equation}

\noindent \textbf{(A2) (Parameterization regularity):} The policy $\pi_{\theta}(a \mid s)$ is twice continuously differentiable w.r.t $\theta$ which implies for any choice of the parameter $\theta$, any state-action pair (s,a), there exist a constraint $0 < C_g, C_H < \infty$ such that
\begin{equation}
    \|\nabla \text{log} \pi_{\theta}(a \mid s)\| \leq C_g, \, \text{and} \ \, \|\nabla^{2} \text{log} \pi_{\theta}(a \mid s)\| \leq C_H \quad \forall \quad (s,a) \in |\mathcal{S}| \times |\mathcal{A}|
\end{equation}
\noindent \textbf{(A3) (Smooth critic):} The critic $Q(s,a;\omega)$ is differentiable in $\omega$ and its parameter gradient is uniformly bounded as
\begin{equation}
    \|\nabla_{\omega} Q^{c}(s,a;\omega)\| \leq C_c \quad \forall \quad (s,a) \in |\mathcal{S}| \times |\mathcal{A}|
\end{equation}
\noindent \textbf{(A4) (Critic tracking):} For each actor parameter $\theta$, let $\omega^\star(\theta)$ denote the critic parameter that is optimal for the current policy $\pi_\theta$. Under two-timescale actor--critic updates, the almost converged quasi-stationary critic parameter tracks this optimum with error
\begin{equation}
\| \omega - \omega^\star(\theta) \| \leq \epsilon_c,
\end{equation}
where $\epsilon_c$ is small when the critic timescale is sufficiently faster than the actor timescale.
\\ \\
\noindent \textbf{(A5) (Slow actor variation):} During actor update, the critic parameter $\omega$ is considered fixed with respect to the policy (actor) parameter $\theta$:
\begin{equation}
\frac{\partial \omega}{\partial \theta} \approx 0.
\end{equation}

Based on the above assumptions, we now show that the actor--critic interaction term becomes small under the above assumptions.

\textbf{Proposition 1.} \textit{Under Assumptions A1--A5, the actor--critic interaction term}
\begin{equation}
\mathcal{H}_{12}^{AC}(\theta)
=
\mathbb{E}_{\rho_\gamma}
\left[
\nabla_\theta \log \pi_\theta(a|s)
\nabla_\theta Q_c(s,a;\omega)^\top
\right]
\label{eq:actor-critic-interaction-term}
\end{equation}
\textit{satisfies}
\begin{equation}
\|H_{12}^{AC}(\theta)\| \leq C_g C_c \left\|\frac{\partial \omega}{\partial \theta}\right\|.
\end{equation}
\textit{In particular, under the two-timescale quasi-stationary approximation
$\partial \omega / \partial \theta \approx 0$, we obtain $
H_{12}^{AC}(\theta) \approx 0$}

\begin{proof}
Using the chain rule, the dependence of the critic on the actor parameters is given by
\begin{equation}
\nabla_\theta Q^{c}(s,a;\omega)
=
\frac{\partial}{\partial \theta}Q^{c}(s,a;\omega)
=
\left(\frac{\partial \omega}{\partial \theta}\right)^\top
\frac{\partial}{\partial \omega}Q^{c}(s,a;\omega)
=
\left(\frac{\partial \omega}{\partial \theta}\right)^\top
\nabla_\omega Q^{c}(s,a;\omega).
\end{equation}
Therefore Eq \eqref{eq:actor-critic-interaction-term} can be written as 
\begin{align}
\mathcal{H}_{12}^{AC}(\theta)
=
\mathbb{E}_{\rho_\gamma}
\left[
\nabla_\theta \log \pi_\theta(a|s)
\left(\left(\frac{\partial \omega}{\partial \theta}\right)^\top
\nabla_\omega Q^{c}(s,a;\omega)\right)^\top
\right]
\\
=
\sum_{(s,a)}{\rho_{\gamma}(s,a;\theta)\nabla_\theta \log \pi_\theta(a|s)
\nabla_\omega Q^{c}(s,a;\omega)^\top\frac{\partial \omega}{\partial \theta}}
\end{align}
Taking the operator-norm and applying Cauchy-Swartz inequality \cite{rudin1976principles}:
\begin{align}
\|\mathcal{H}_{12}^{AC}(\theta)\|
&\leq
\sum_{(s,a)}\rho_{\gamma}(s,a;\theta)
\left\|
\nabla_\theta \log \pi_\theta(a|s)
\nabla_\omega Q^{c}(s,a;\omega)^\top\frac{\partial \omega}{\partial \theta}
\right\|
\\
&=
\sum_{(s,a)}\rho_{\gamma}(s,a;\theta)
\left\|
\nabla_\theta \log \pi_\theta(a|s)\right\|
\left\|
\nabla_\omega Q^{c}(s,a;\omega)^\top
\right\|
\left\|
\frac{\partial \omega}{\partial \theta}
\right\|
\\
&\leq
\mathbb{E}_{\rho_\gamma}
\left[
C_g
C_c
\left\|
\frac{\partial \omega}{\partial \theta}
\right\|
\right]
=
C_g C_c
\left\|
\frac{\partial \omega}{\partial \theta}
\right\| (\text{After normalization w.r.t} \sum_{(s,a)}\rho_{\gamma}(s,a;\theta)).
\end{align}
Under a two-timescale actor--critic update, the critic evolves on a faster timescale and is treated as quasi-stationary during actor updates. So, based on (A4)-(A5) $
\frac{\partial \omega}{\partial \theta} \approx 0$,
which implies $\mathcal{H}_{12}^{AC}(\theta) \approx 0$.
\end{proof}

\noindent Using Proposition~\ref{prop:h12_bound}, the policy Hessian in the actor--critic setting can be approximated as
\begin{align}
\nabla^2 J(\theta)
&=
\mathcal{H}_1(\theta) + \mathcal{H}_2(\theta) + \mathcal{H}_{12}(\theta) + \mathcal{H}_{12}(\theta)^\top \\
&\approx
\mathcal{H}_1(\theta) + \mathcal{H}_2(\theta).
\end{align}
Thus, under the two-timescale approximation, the interaction-free curvature estimate is
\begin{equation}
\widetilde{\mathcal{H}}^{AC}(\theta) = H_1(\theta) + H_2(\theta).
\end{equation}
This justifies Hessian approximation on the second-order actor--critic update that we have proposed in this paper. The result above does not claim that the true policy-dependent value function $Q^\pi(s,a;\theta)$ is independent of $\theta$. Rather, we state that in the two-timescale actor-critic implementation, when the critic is represented by a separate parameter vector $\omega$ and is updated on a faster timescale, the critic estimate $Q_c(s,a;\omega)$ can be treated as fixed w.r.t $\theta$. Based on Proposition \ref{prop:h12_bound}, we prove that in the two-timescale actor-critic method, the interaction term involving $\nabla_\theta Q^c$ is suppressed enabling a stable HVP computation of the outer product term $\mathcal{H}_1$ and intrinsic curvature term $\mathcal{H}_2$ without incurring the high-variance interaction term $H_{12}$.

\section{Curvature Properties of the Policy Hessian}
\label{app:curvature}

Once we remove the policy Hessian using the two-timescale insight, we approximate the policy Hessian as ACGN1 and ACGN2 as mentioned in Section \ref{subsec:acgn1} and \ref{subsec:acgn2}. In this section, we analyze the curvature structure of the policy Hessians and after approximations used in ACGN1 and ACGN2. Specifically, we characterize when the outer product $\mathcal{H}_{1}$ is positive semidefinite and when the intrinsic curvature term $\mathcal{H}_{2}$ is negative semidefinite.

\paragraph{Hessian decomposition:}
The policy gradient can be written using the likelihood-ratio identity as mentioned in \eqref{eq:policygradient}:
\begin{equation}
\nabla_\theta J(\theta)
=
\mathbb{E}_{s,a \sim \pi_\theta}
\left[
Q^{\pi_\theta}(s,a)
\nabla_\theta \log \pi_\theta(a|s)
\right].
\end{equation}
Differentiating again (ignoring actor--critic interaction terms based on the two-timescale insight) yields the dominant policy curvature components:
\begin{align}
\mathcal{H}_1
&=
\mathbb{E}_{s,a}
\left[
Q^{c}(s,a)
\nabla_\theta \log \pi_\theta(a|s)
\nabla_\theta \log \pi_\theta(a|s)^\top
\right],
\\
\mathcal{H}_2
&=
\mathbb{E}_{s,a}
\left[
w(s,a)
\nabla_\theta^2 \log \pi_\theta(a|s)
\right],
\end{align}

\paragraph{Positive semidefiniteness of $\mathcal{H}_1$:}
For any vector $v \in \mathbb{R}^d$,
\begin{equation}
v^\top \mathcal{H}_1 v
=
\mathbb{E}_{s,a}
\left[
w(s,a)
\left(
v^\top \nabla_\theta \log \pi_\theta(a|s)
\right)^2
\right].
\end{equation}
where $w(s,a)$ can either be $Q^{c}(s,a)$ or $A(s,a)$.
Since $(v^\top \nabla_\theta \log \pi_\theta)^2 \geq 0$, it follows that
\begin{equation}
\mathcal{H}_1 \succeq 0
\quad \text{if } w(s,a) \geq 0.
\end{equation}
Thus, $\mathcal{H}_1$ is a weighted outer-product (Fisher-like) term and is
positive semidefinite under nonnegative weighting. To ensure non-negative weighting we first use a preprocessing step during optimization in environment with non negative rewards, where we subtract with the minimum possible reward to make the reward positive before optimization. Secondly, for the ACGN1 where we use advantage estimates, we use a small learning rate for convergence as mentioned in \cite{furston2016approximate}. 
Thirdly, we add a ridge term or a damping term $\lambda I$, where $-\infty < \lambda < 0$, to ensure that the outer product term $\mathcal{H}_{\text{ACGN1}} + \lambda I$ is negative-definite. Additionally, we set a screening step, where only the negative-definite Hessian estimates are allowed to update in the Newton step. The first 3 steps minimize the probability of the outer product term becoming indefinite, and the last step is to ensure high variance updates caused due to indefinite hessians as the curvature information will be extremely noisy and misleading in indefinite Hessians.

\paragraph{Log-concavity and curvature of $\mathcal{H}_2$.}
A policy is log-concave in $\theta$ if $\log \pi_\theta(a|s)$ is concave, which implies
\begin{equation}
\nabla_\theta^2 \log \pi_\theta(a|s) \preceq 0.
\end{equation}
Under this condition,
\begin{equation}
v^\top \mathcal{H}_2 v
=
\mathbb{E}_{s,a}
\left[
w(s,a)\,
v^\top \nabla_\theta^2 \log \pi_\theta(a|s) v
\right]
\leq 0
\end{equation}
whenever $w(s,a) \geq 0$. Therefore, $
\mathcal{H}_2 \preceq 0$ for log-concave policies with nonnegative weights. 

\paragraph{Examples of log-concave policies.}
\begin{itemize}
    \item \textbf{Softmax policy:}
    For linear logits $z_\theta(s,a) = \theta_a^\top \phi(s)$,
    \begin{equation}
    \log \pi_\theta(a|s)
    =
    \theta_a^\top \phi(s)
    -
    \log \sum_{a'} \exp(\theta_{a'}^\top \phi(s)),
    \end{equation}
    where, the log-sum-exp term is convex, and its negative is concave. $\implies \nabla^2 \log \pi_\theta \preceq 0$. Since we take log before estimating curvature to be concave or convex, we call it log-concave policy.

    \item \textbf{Gaussian policy with MLP mean and learned variance:} For continuous-control environments, we use a Gaussian policy parameterized by an MLP:
\[
\pi_\theta(a\mid s)
=
\mathcal{N}\!\left(a;\mu_\theta(s),\sigma_\theta^2(s)\right),
\]
where both the mean \(\mu_\theta(s)\) and the standard deviation
\(\sigma_\theta(s)\) are learned functions of the state. In contrast to the
fixed-covariance linear Gaussian case, this policy is not guaranteed to be
globally log-concave in \(\theta\), since both the nonlinear mean network and the
learned variance contribute to the curvature of the log-density. Nevertheless, the log-density retains a locally quadratic structure in the
action residual \(a-\mu_\theta(s)\), and the intrinsic curvature term remains
informative for local policy improvement. To improve numerical stability, we
bound the mean output using a \(\tanh\) activation and clip the standard
deviation to avoid degenerate variances. These stabilizations reduce extreme
curvature values and make the damped Hessian/Gauss--Newton estimates more
reliable in practice, yielding consistent but moderate gains in continuous
control environments.
\end{itemize}

\section{Convergence Analysis}
\label{subsec:convergence}

We now analyze the convergence behavior of the proposed damped second-order
actor update. Assumptions (A1)--(A5) ensure smoothness of the objective, bounded
policy and critic derivatives, and the validity of the two-timescale
quasi-stationary critic approximation. In addition, for the convergence
argument, we require the damped curvature estimate used in the actor update to
be uniformly negative definite. Therefore, to establish a stable second-order ascent direction and derive
an admissible learning-rate range, we assume that the damped curvature
estimate is uniformly negative definite and that the policy gradient is
uniformly bounded.

\noindent\textbf{(A6) Bounded gradients.}
There exists \(G>0\) such that
\[
\|g_t\|\le G
\qquad
\text{for all }t .
\]

\noindent \textbf{(A7) Negative-definite damped curvature.}
Let the damped estimated Hessian $\widehat{H}_t \in \mathbb{R}^{d \times d}$ used in the actor update be calculated from $\widetilde{H}_t  \in \mathbb{R}^{d \times d}$ using a ridge-term as follows:
\[
\widehat{H}_t = \widetilde{H}_t + \lambda I .
\]
We assume that there exist constants $0<m<M<\infty$ such that
\[
-M||x||^2 \leq x^\top\widehat{H}_tx \leq -m||x||^2 .
\]
where $x \in \mathbb{R}^{d} $ be any arbitrary vector.
Equivalently,
\[
-\frac{||x||^2}{m} \leq x^\top\widehat{H}_t^{-1}x \leq -\frac{||x||^2}{M}.
\]
This assumption states that, after damping, only well-conditioned negative
definite curvature estimates are used for the actor update. But damping alone does not guarantee
negative-definiteness. So we filter out indefinite updates that persist after damping.

\paragraph{Hessian approximation error.}

Let $H_t$ denote the true curvature term and $\widehat{H}_t$ denote the practical
damped Gauss--Newton curvature estimate. The error in the curvature estimate is
bounded as
\[
\|\widehat{H}_t - H_t\|
\leq
\epsilon_H + \epsilon_{GN} + \epsilon_{\mathrm{damp}},
\]
where $\epsilon_H$ denotes the critic-induced Hessian error, $\epsilon_{GN}$
denotes the Gauss--Newton approximation error, and
$\epsilon_{\mathrm{damp}}$ denotes the bias introduced by damping. From
(A2)--(A5), the critic-tracking and two-timescale approximation imply that the
actor--critic interaction term satisfies
\[
\|\mathcal{H}_{12}^{AC}(\theta)\|
\leq
C_gC_c
\left\|
\frac{\partial \omega}{\partial \theta}
\right\|
\approx 0.
\]
Thus, the dominant practical curvature approximation error is controlled by
critic tracking, Gauss--Newton approximation, and damping.

\begin{proposition}[Exact learning-rate bound for damped second-order ascent]
\label{prop:damped_second_order_convergence}
Suppose assumptions (A1)--(A7) hold. Consider the damped second-order actor
update
\[
\theta_{t+1}
=
\theta_t
-
\alpha_t\widehat H_t^{-1}g_t .
\]
If the learning rate satisfies
\[
0<\alpha_t<\alpha_{\max},
\]
where
\[
\alpha_{\max}
=
\frac{3m^2}{\rho G}
\left[
-\frac12
+
\sqrt{
\frac14+\frac{2\rho G}{3Mm}
}
\right],
\]
then the update yields monotonic improvement up to critic, curvature,
Gauss--Newton, and damping approximation errors. Moreover, for constant
\(\alpha_t=\alpha<\alpha_{\max}\),
\[
\min_{0\le t<T}\|\nabla J(\theta_t)\|^2
=
O\left(\frac1T\right)
+
O(\epsilon_C+\epsilon_H+\epsilon_{GN}+\epsilon_{\mathrm{damp}}),
\]
and therefore
\[
\min_{0\le t<T}\|\nabla J(\theta_t)\|
=
O\left(\frac1{\sqrt T}\right).
\]
\end{proposition}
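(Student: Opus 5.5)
The plan is to run the standard descent-lemma argument for the ascent objective, but using the cubic (Hessian-Lipschitz) form of Taylor's theorem from (A1) rather than the quadratic one. This cubic form is what produces the $\rho G$ factor and the square-root expression in $\alpha_{\max}$.

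Write $\Delta_t=-\alpha_t\widehat H_t^{-1}g_t$. By the $\rho$-Lipschitz Hessian in (A1),
\[
J(\theta_{t+1})\ \ge\ J(\theta_t)+g_t^\top\Delta_t+\tfrac12\Delta_t^\top\nabla^2J(\theta_t)\Delta_t-\tfrac{\rho}{6}\|\Delta_t\|^3 .
\]
I first treat $\nabla^2 J(\theta_t)$ as $\widehat H_t$. The discrepancy $\|\widehat H_t-H_t\|\le\epsilon_H+\epsilon_{GN}+\epsilon_{\mathrm{damp}}$ is bounded in the paragraph above, with the $\mathcal H_{12}^{AC}$ part controlled by Proposition~\ref{prop:h12_bound}. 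It contributes an additive term $\tfrac12\|\Delta_t\|^2(\epsilon_H+\epsilon_{GN}+\epsilon_{\mathrm{damp}})\le \tfrac{\alpha_t^2G^2}{2m^2}(\cdots)$, which I carry along as the approximation-error slack. The critic error $\epsilon_C$ from (A4) enters the same way through $g_t$ versus $\nabla J(\theta_t)$.

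The three terms are then bounded using (A7). For the linear term, $g_t^\top\Delta_t=-\alpha_t g_t^\top\widehat H_t^{-1}g_t\ge \alpha_t\|g_t\|^2/M$. For the quadratic term, $\Delta_t^\top\widehat H_t\Delta_t=\alpha_t^2 g_t^\top\widehat H_t^{-1}g_t\ge-\alpha_t^2\|g_t\|^2/m$. For the cubic term, $\|\widehat H_t^{-1}\|\le 1/m$ gives $\|\Delta_t\|\le\alpha_t\|g_t\|/m$, and (A6) gives $\|g_t\|^3\le G\|g_t\|^2$. Together,
\[
J(\theta_{t+1})-J(\theta_t)\ \ge\ \|g_t\|^2\Big[\tfrac{\alpha_t}{M}-\tfrac{\alpha_t^2}{2m}-\tfrac{\rho G\alpha_t^3}{6m^3}\Big]-\text{(errors)} .
\]
After dividing by $\alpha_t\rho G/(6m^3)$, the bracket is positive iff
\[
\alpha_t^2+\tfrac{3m^2}{\rho G}\alpha_t-\tfrac{6m^3}{\rho G M}<0 .
\]
With $b=3m^2/(\rho G)$ and $c=6m^3/(\rho GM)$, the positive root is $b\big[-\tfrac12+\sqrt{\tfrac14+c/b^2}\big]$. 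Since $c/b^2=2\rho G/(3Mm)$, this root is exactly $\alpha_{\max}$, so $0<\alpha_t<\alpha_{\max}$ gives monotone improvement up to the error terms.

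For the rate, fix $\alpha<\alpha_{\max}$ and let $\kappa(\alpha)>0$ be the bracket value. Summing over $t=0,\dots,T-1$ and telescoping gives
\[
\kappa\sum_t\|g_t\|^2\le J^\star-J(\theta_0)+T\cdot O(\epsilon\text{'s}),
\]
where $J^\star<\infty$ because rewards are bounded and the horizon is finite or discounted. Dividing by $T$ and taking the minimum gives $\min_t\|g_t\|^2=O(1/T)+O(\epsilon_C+\epsilon_H+\epsilon_{GN}+\epsilon_{\mathrm{damp}})$. Replacing $g_t$ by $\nabla J(\theta_t)$ costs another $O(\epsilon_C)$ via (A3)--(A4).

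The main obstacle is bookkeeping the errors so that they stay additive and do not destroy the sign of the bracket. In particular, the final $O(1/\sqrt T)$ statement holds only in the regime where the $\epsilon$ terms are negligible, for instance when they vanish or are $O(1/T)$. I would state that restriction explicitly rather than claim it unconditionally.
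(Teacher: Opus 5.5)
Your proposal is correct and follows the paper's proof essentially step for step. Both use the cubic Taylor lower bound from the $\rho$-Lipschitz Hessian, replace $\nabla^2 J(\theta_t)$ by $\widehat H_t$ with the discrepancy treated as error, apply the (A6)/(A7) bounds to get the bracket $\frac{1}{M}-\frac{\alpha_t}{2m}-\frac{\rho G\alpha_t^2}{6m^3}$ and the same positive root $\alpha_{\max}$, and telescope against an upper bound $J^\star$. Your explicit error accounting (the $\frac{\alpha_t^2G^2}{2m^2}$ factor and the gap between $g_t$ and $\nabla J(\theta_t)$) and your caveat that $\min_t\|\nabla J(\theta_t)\|=O(1/\sqrt T)$ holds only when the approximation errors vanish or are $O(1/T)$ are more careful than the paper, which simply sets $g_t=\nabla J(\theta_t)$ and drops the $O(\sqrt{\epsilon})$ term in its final step.
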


\begin{proof}
Using the third-order Taylor lower bound from Assumption (A1), we have
\[
J(\theta_{t+1})
\ge
J(\theta_t)
+
g_t^\top \Delta_t
+
\frac12 \Delta_t^\top H_t\Delta_t
-
\frac{\rho}{6}\|\Delta_t\|^3
-
O(\epsilon_C+\epsilon_H+\epsilon_{GN}+\epsilon_{\mathrm{damp}}).
\]
Replacing \(H_t\) by the damped curvature estimate \(\widehat H_t\), with the
resulting approximation error absorbed into
\(O(\epsilon_C+\epsilon_H+\epsilon_{GN}+\epsilon_{\mathrm{damp}})\), gives
\[
J(\theta_{t+1})
\ge
J(\theta_t)
+
g_t^\top \Delta_t
+
\frac12 \Delta_t^\top \widehat H_t\Delta_t
-
\frac{\rho}{6}\|\Delta_t\|^3
-
O(\epsilon_C+\epsilon_H+\epsilon_{GN}+\epsilon_{\mathrm{damp}}).
\]
Substituting
\[
\Delta_t=-\alpha_t\widehat H_t^{-1}g_t
\]
yields
\begin{align}
J(\theta_{t+1})
&\ge
J(\theta_t)
-
\alpha_t g_t^\top \widehat H_t^{-1}g_t
+
\frac{\alpha_t^2}{2}
g_t^\top \widehat H_t^{-1}g_t
-
\frac{\rho\alpha_t^3}{6}
\|\widehat H_t^{-1}g_t\|^3
-
O(\epsilon_C+\epsilon_H+\epsilon_{GN}+\epsilon_{\mathrm{damp}}).
\end{align}

By Assumption (A7),
\[
g_t^\top \widehat H_t^{-1}g_t
\le
-\frac1M\|g_t\|^2,
\]
and
\[
g_t^\top \widehat H_t^{-1}g_t
\ge
-\frac1m\|g_t\|^2.
\]
Moreover,
\[
\|\widehat H_t^{-1}g_t\|
\le
\frac1m\|g_t\|.
\]
Therefore,
\begin{align}
J(\theta_{t+1})-J(\theta_t)
&\ge
\frac{\alpha_t}{M}\|g_t\|^2
-
\frac{\alpha_t^2}{2m}\|g_t\|^2
-
\frac{\rho\alpha_t^3}{6m^3}\|g_t\|^3
-
O(\epsilon_C+\epsilon_H+\epsilon_{GN}+\epsilon_{\mathrm{damp}}).
\end{align}
Using \(\|g_t\|\le G\) from Assumption (A6), we obtain
\[
J(\theta_{t+1})-J(\theta_t)
\ge
\alpha_t\|g_t\|^2
\left(
\frac1M
-
\frac{\alpha_t}{2m}
-
\frac{\rho\alpha_t^2G}{6m^3}
\right)
-
O(\epsilon_C+\epsilon_H+\epsilon_{GN}+\epsilon_{\mathrm{damp}}).
\]

Hence, monotonic improvement up to approximation error is guaranteed whenever
\[
\frac1M
-
\frac{\alpha_t}{2m}
-
\frac{\rho\alpha_t^2G}{6m^3}
>0.
\]
Equivalently,
\[
\frac{\rho G}{6m^3}\alpha_t^2
+
\frac{1}{2m}\alpha_t
-
\frac1M
<0.
\]
Solving this quadratic inequality gives
\[
0<\alpha_t<\alpha_{\max},
\]
where
\[
\alpha_{\max}
=
\frac{
-\frac{1}{2m}
+
\sqrt{
\frac{1}{4m^2}
+
\frac{2\rho G}{3m^3M}
}
}
{\frac{\rho G}{3m^3}}.
\]
Simplifying,
\[
\alpha_{\max}
=
\frac{3m^3}{\rho G}
\left[
-\frac{1}{2m}
+
\sqrt{
\frac{1}{4m^2}
+
\frac{2\rho G}{3m^3M}
}
\right],
\]
and therefore
\[
\boxed{
\alpha_{\max}
=
\frac{3m^2}{\rho G}
\left[
-\frac12
+
\sqrt{
\frac14+\frac{2\rho G}{3Mm}
}
\right].
}
\]

Thus, for every \(0<\alpha_t<\alpha_{\max}\), there exists a constant
\(c_\alpha>0\) such that
\[
\frac1M
-
\frac{\alpha_t}{2m}
-
\frac{\rho\alpha_t^2G}{6m^3}
\ge
c_\alpha .
\]
Consequently,
\[
J(\theta_{t+1})-J(\theta_t)
\ge
\alpha_t c_\alpha \|g_t\|^2
-
O(\epsilon_C+\epsilon_H+\epsilon_{GN}+\epsilon_{\mathrm{damp}}).
\]

Summing from \(t=0\) to \(T-1\), we get
\[
\sum_{t=0}^{T-1}
\alpha_t c_\alpha \|g_t\|^2
\le
J(\theta^\star)-J(\theta_0)
+
O\left(
\sum_{t=0}^{T-1}
(\epsilon_C+\epsilon_H+\epsilon_{GN}+\epsilon_{\mathrm{damp}})
\right),
\]
where \(J(\theta^\star)\) is an upper bound on the objective.

For constant \(\alpha_t=\alpha<\alpha_{\max}\), this gives
\[
\alpha c_\alpha
\sum_{t=0}^{T-1}
\|g_t\|^2
\le
J(\theta^\star)-J(\theta_0)
+
O\left(
T(\epsilon_C+\epsilon_H+\epsilon_{GN}+\epsilon_{\mathrm{damp}})
\right).
\]
Therefore,
\[
\min_{0\le t<T}\|g_t\|^2
\le
\frac{J(\theta^\star)-J(\theta_0)}
{\alpha c_\alpha T}
+
O(\epsilon_C+\epsilon_H+\epsilon_{GN}+\epsilon_{\mathrm{damp}}).
\]
Since \(g_t=\nabla J(\theta_t)\), we obtain
\[
\min_{0\le t<T}\|\nabla J(\theta_t)\|^2
=
O\left(\frac1T\right)
+
O(\epsilon_C+\epsilon_H+\epsilon_{GN}+\epsilon_{\mathrm{damp}}).
\]
Equivalently,
\[
\min_{0\le t<T}\|\nabla J(\theta_t)\|
=
O\left(\frac1{\sqrt T}\right).
\]
If the approximation errors vanish asymptotically, then
\[
\liminf_{t\to\infty}
\|\nabla J(\theta_t)\|^2
=
0.
\]
This proves convergence to a first-order stationary point in the ideal setting,
and convergence to a neighborhood of stationarity under persistent approximation
error.
\end{proof}

\paragraph{Conservative sufficient learning-rate bound.}
Although Proposition~\ref{prop:damped_second_order_convergence} gives the exact
admissible learning-rate range, a simpler sufficient condition can be obtained
by separately controlling the two negative terms in the bracket. Specifically,
if
\[
0<\alpha_t
\le
\min\left\{
\frac{m}{2M},
\sqrt{\frac{3m^3}{2\rho GM}}
\right\},
\]
then
\[
\frac{\alpha_t}{2m}
\le
\frac{1}{4M},
\qquad
\frac{\rho\alpha_t^2G}{6m^3}
\le
\frac{1}{4M}.
\]
Hence,
\[
\frac1M
-
\frac{\alpha_t}{2m}
-
\frac{\rho\alpha_t^2G}{6m^3}
\ge
\frac{1}{2M}>0.
\]
This conservative condition is less tight than the exact discriminant bound but
is often easier to state and tune in practice.

\paragraph{Interpretation.}
The exact bound shows that the admissible learning rate is controlled by the
conditioning of the damped curvature matrix through \(m\) and \(M\), the Hessian
Lipschitz constant \(\rho\), and the gradient bound \(G\). Better conditioned
curvature estimates allow larger stable second-order steps, while large
third-order variation or large gradients require smaller steps. Thus, damping
serves two roles: it stabilizes the inverse curvature direction and directly
controls the allowable step size through the constants \(m\) and \(M\). In the ideal setting where the critic tracks the optimal value function exactly
(\(\epsilon_C \to 0\)) and the curvature estimate is accurate
(\(\epsilon_H, \epsilon_{GN}, \epsilon_{\mathrm{damp}} \to 0\)), the method
recovers convergence to a true stationary point:
\[
\liminf_{t\to\infty} \|\nabla J(\theta_t)\|^2 = 0.
\]

For \textbf{ACGN2}, under log-concave policy parameterizations (e.g., softmax or
Gaussian policies), the intrinsic curvature satisfies
\[
\widetilde{H}_t \preceq 0,
\]
which implies that the negative damped matrix usually used in stochastic optimization (loss minimization) updates:
\[
-\widetilde{H}_t + \lambda I
\]
is naturally positive definite. Consequently, ACGN2 preserves the concave
curvature structure required for stable maximization and admits a well-conditioned
second-order ascent direction. In contrast, for \textbf{ACGN1}, the inclusion of the outer-product term
introduces a positive semi-definite component that can conflict with the
intrinsic curvature, resulting in an indefinite curvature estimate. Therefore,
damping plays a critical role in ensuring that the effective update matrix
remains negative-definite, thereby guaranteeing a stable ascent direction.

Overall, the analysis highlights that while both methods benefit from curvature
information, ACGN2 provides a structurally more stable approximation, whereas
ACGN1 relies more heavily on damping to control curvature-induced instability.

\section{Implementation Details and Runtime Analysis}
\label{sec:appendix_implementation}

\subsection{Hardware Setup and Experimental Protocol}
All experiments were conducted on a CPU-optimized workstation equipped with a multi-core processor and sufficient system memory to support parallel execution. To ensure statistical reliability, each method was evaluated across five independent random seeds: \{42, 100, 2026, 777, 1234\}

We adopt a fully CPU-parallelized evaluation pipeline using Python's \texttt{multiprocessing} and \\
\texttt{concurrent.futures} libraries. Each seed is executed as an independent worker process, enabling near-linear scaling in wall-clock efficiency and eliminating GPU transfer overheads, which non-trivial for low-dimensional control tasks. All experiments enforce deterministic initialization by fixing seeds for PyTorch, NumPy, and Gymnasium environments.

\begin{figure}[h]
    \centering
    \includegraphics[width=0.45\textwidth]{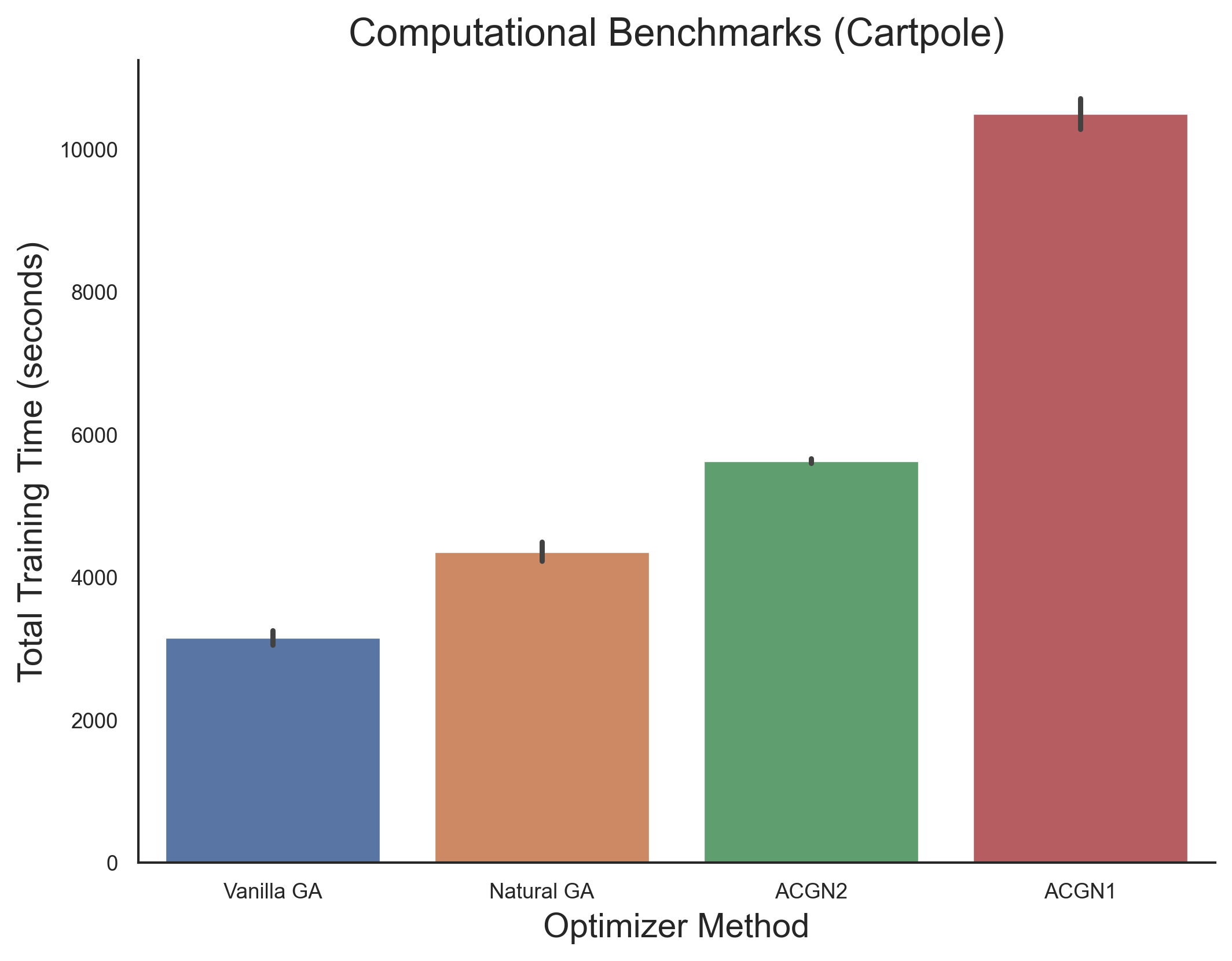}
    \includegraphics[width=0.45\textwidth]{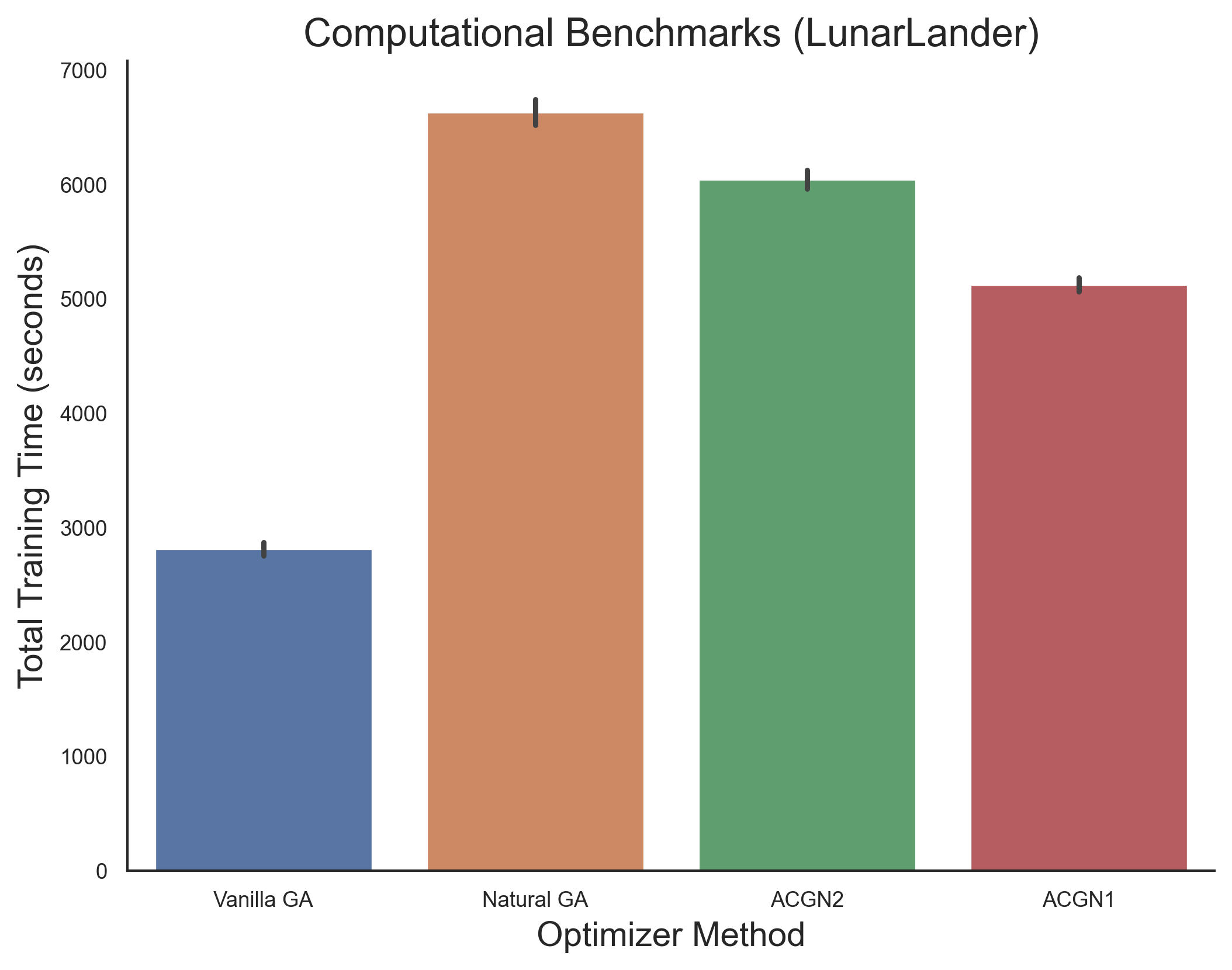}
    
    \vspace{0.4cm}
    
    \includegraphics[width=0.45\textwidth]{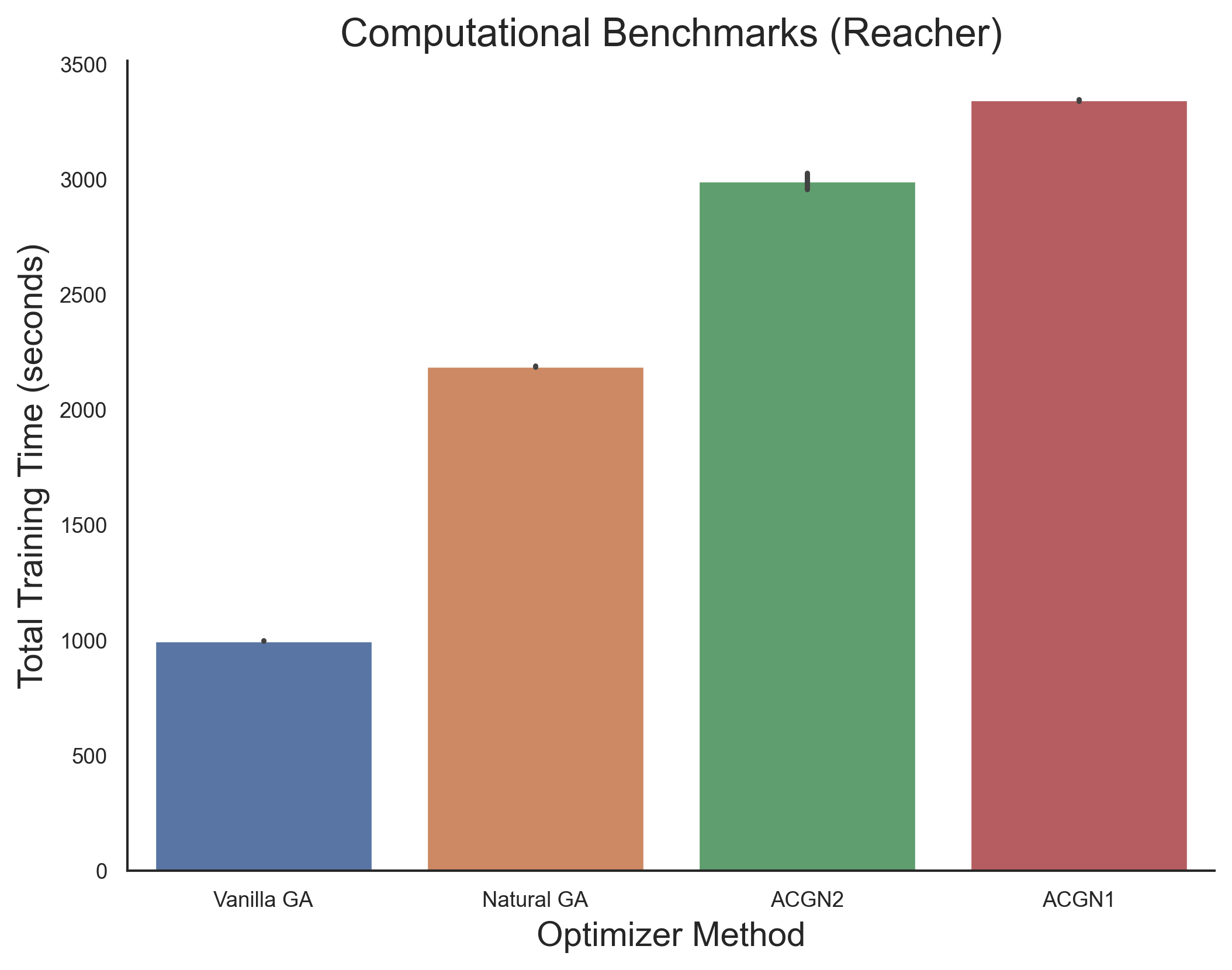}
    \includegraphics[width=0.45\textwidth]{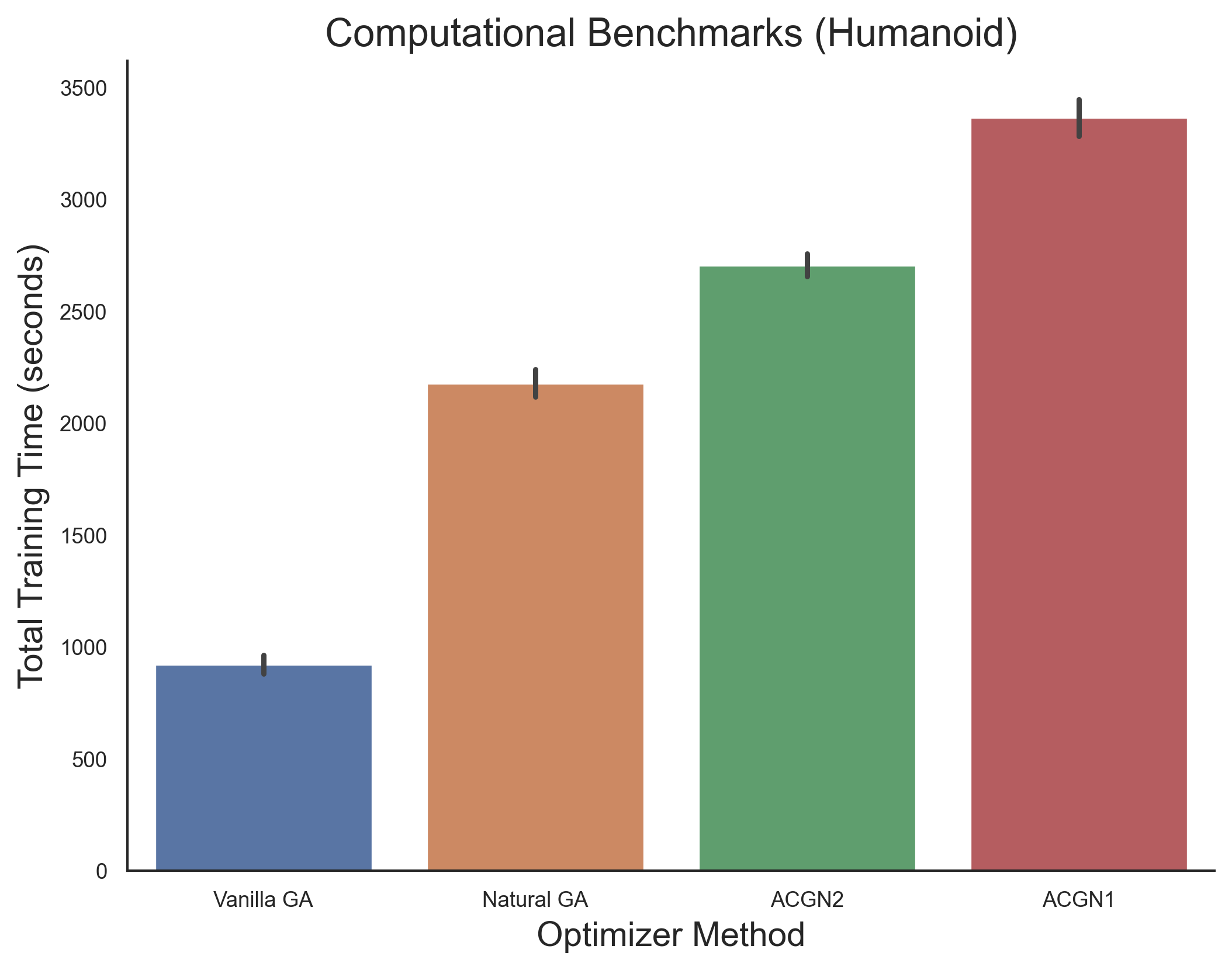}
    
    \caption{
    Wall-clock training time (in seconds) for different optimization methods across four environments: CartPole, LunarLander, Reacher, and Humanoid. Error bars denote standard deviation across five random seeds. 
    Vanilla Gradient Ascent (blue) consistently exhibits the lowest computational cost due to its first-order nature. Natural Gradient (orange) introduces moderate overhead from Fisher matrix computations. The proposed second-order methods ACGN2 (green) and ACGN1 (red) incur higher costs due to curvature estimation and CG-based updates, with ACGN1 being the most expensive owing to its additional outer-product structure.
    }
    \label{fig:runtime_benchmarks}
\end{figure}

\subsection{Runtime Performance and Computational Trade-offs}

Second-order methods introduce additional per-update computational cost due to Fisher/HVP computation, and curvature estimation with damping. While these methods improve sample efficiency, they require higher wall-clock time per iteration compared to first-order approaches such as Vanilla Gradient Ascent (REINFORCE).

The results reveal a clear computational hierarchy over the methods. Vanilla Gradient Ascent remains the best in the computational hierarchy across all environments, reflecting its lightweight first-order updates. Natural Gradient Ascent introduces additional overhead due to Fisher matrix estimation, but remains relatively efficient due to structured approximations. 

Among our proposed approximation methods, \textbf{ACGN2} incurs further computational cost arising from the intrinsic curvature estimation and HVP-updates; however this overhead remains suppressed due to the absence of outer-product curvature terms.

In contrast, \textbf{ACGN1} is consistently the most computationally expensive, because of the inclusion of both intrinsic and outer-product components which lead to higher variance in curvature estimates and consequently more expensive HVP-based gradient iterations.

As shown in Fig \ref{fig:runtime_benchmarks}, the computational gaps become more evident in higher-dimensional environments (Reacher, Humanoid), where the curvature estimation complexity scales with the policy dimensionality. This highlights a key trade-off: Second-order methods \textbf{improve optimization quality} at the expense of \textbf{increased per-update computation}. \\

$\bullet$ \textbf{ACGN2} provides a favorable balance between computational overhead and curvature quality, offering stable second-order updates at moderate cost due to its intrinsic curvature structure, but may still exhibit sensitivity to noisy curvature estimates in high-dimensional settings. \\

$\bullet$ \textbf{ACGN1} incorporates richer curvature information through both intrinsic and outer-product terms, but this leads to higher computational overhead, increased variance in curvature estimates, and a stronger dependence on damping for stability, limiting its practicality in large-scale or high-dimensional environments.

\textbf{Interpretation:} Although the proposed methods incur higher computational cost per update, the additional computation is compensated by faster convergence in substantially fewer training episodes, thereby reducing the total number of optimization steps required to reach the optimum.

\subsection{Implementation Framework}
All actor-critic methods in our two-timescale architecture were implemented in Python using:
\\
$\bullet$ \textbf{PyTorch}: for policy and critic networks, and automatic differentiation.
\\
$\bullet$ \textbf{Gymnasium}: for standardized RL environments (CartPole, LunarLander, Reacher, Humanoid).
\\
The second-order updates (Natural GA, ACGN1, ACGN2) utilize HVPs to compute update directions as mentioned in \cite{maniyar2024crpn} efficiently without explicit matrix inversion.